\documentclass{article}

\usepackage[utf8]{inputenc}

\usepackage{color}
\usepackage{amsmath}
\usepackage{amssymb}
\usepackage{mathtools}
\usepackage{amsthm}
\usepackage{geometry}
\usepackage{hyperref}
\usepackage{cleveref}
\usepackage[colorinlistoftodos]{todonotes}
\usepackage{dsfont}
\usepackage{accents}
\usepackage{mathrsfs}
\usepackage{authblk}

\theoremstyle{plain}
\newtheorem{theorem}{Theorem}[section]
\newtheorem{lemma}[theorem]{Lemma}
\newtheorem{definition}[theorem]{Definition}
\newtheorem{corollary}[theorem]{Corollary}
\newtheorem{proposition}[theorem]{Proposition}

\theoremstyle{remark}
\newtheorem*{rem*}{Remark}
\newtheorem{remark}[theorem]{Remark}

\overfullrule=1mm
\binoppenalty=\maxdimen
\relpenalty=\maxdimen

\hypersetup{bookmarksnumbered}

\newcommand{\EE}{\mathbb{E}}
\newcommand{\PP}{\mathbb{P}}
\newcommand{\R}{\mathbb{R}}
\newcommand{\N}{\mathbb{N}}

\newcommand{\Radon}{\mathcal{R}}

\newcommand{\supp}{\operatorname{supp}}

\newcommand{\Indicator}{\mathds{1}}

\newcommand{\NN}{\mathcal{NN}}
\newcommand{\lebesgue}{\boldsymbol{\lambda}}

\newcommand{\IIDsim}{\overset{iid}{\sim}}

\renewcommand{\emptyset}{\varnothing}


\numberwithin{equation}{section}

\title{Dimension-independent learning rates for high-dimensional classification problems}

\author[2]{Andres Felipe Lerma-Pineda}
\author[2]{Philipp Petersen}
\author[1,2]{Simon Frieder}
\author[1,3]{Thomas Lukasiewicz}
\affil[1]{Department of Computer Science, University of Oxford, UK}
\affil[2]{Department of Mathematics, University of Vienna, Austria}
\affil[3]{Institute of Logic and Computation, Vienna University of Technology, Vienna, Austria}

\begin{document}

\maketitle
\begin{abstract}
We study the problem of approximating and estimating classification functions that have their decision boundary in the $RBV^2$ space. Functions of $RBV^2$ type arise naturally as solutions of regularized neural network learning problems and neural networks can approximate these functions without the curse of dimensionality. 
We modify existing results to show that every $RBV^2$ function can be approximated by a neural network with bounded weights. 
Thereafter, we prove the existence of a neural network with bounded weights approximating a classification function. 
And we leverage these bounds to quantify the estimation rates.
Finally, we present a numerical study that analyzes the effect of different regularity conditions on the decision boundaries.
 \end{abstract}

\noindent
\textbf{Keywords:}
Minimax bounds,
noiseless training data,
deep neural networks,
classification


\noindent
\textbf{Mathematical Subject Classification:} 
68T05,  
62C20,  
41A25,  
41A46  
 
\section{Introduction}

Neural networks (NNs) have shown exceptional performance for highly demanding tasks that take long periods of time and huge effort for humans \cite{lecun2015deep, goodfellow2016deep}.
One famous application area of neural networks is image classification, where the input dimensions of these NNs correspond to the number of pixels in the image, which is typically a large number. Therefore, one intriguing question is whether or not these NNs are subject to the curse of dimensionality. 
An approximation method is said to be subject to the curse of dimensionality if the performance of the method deteriorates exponentially when the dimension grows \cite{bellman1952theory, novak2009approximation}. 

In \cite{barron1993universal}, for example, it was proved that for a certain class of functions on Euclidean spaces---called the Barron class---the number of neurons required for a neural network to approximate an element of the Barron class does not grow exponentially with the dimension of the underlying space. 
More concretely, it is possible to approximate a given Barron function by a shallow NN (i.e., two-layer) with an arbitrary number of neurons $N$ and approximation error in the $L^{2}$ or $L^{\infty}$ norm of the order $N^{-1/2}$ \cite{ barron1992neural, barron1993universal}. 
Such results have been generalized and applied to the study of discontinuous functions as models for binary classification functions \cite{caragea2020neural, petersen2021optimal}. 
In this work, we continue with the study of learning discontinuous functions but now not associated with the class of Barron functions. Instead, we consider the space of $RBV^2$ functions \cite{parhi2022near}, which is another functional class that has been shown to be approximable without a curse of dimensionality.

The problem of learning discontinuous functions appears in many applications, such as, for example, \textit{classification of images}. A classification problem can be modeled via a function defined on an Euclidean space, which is usually called the \textit{classifier}. 
Here, we assume that a classifier is a function of the form $\sum_{k=1}^K c_k \mathds{1}_{\Omega_k}$ where $\Omega_k \subset \R^d$, $d \in \N$ are disjoint sets and $c_k$ is called the label of $\Omega_k$ for $k=1, \dots, K$. 
Examples of such labels are natural numbers $1,2, \dots, K$. This paper is concerned with the problem of approximating and estimating binary classifiers by NNs, i.e., $K=2$. 
Previous works on this topic can be found in \cite{caragea2020neural, imaizumi2019deep, imaizumi2022advantage, petersen2018optimal}. 
In all of these articles, different assumptions on the boundaries $\partial \Omega_k$ are imposed, and results on the approximability and estimability of the classifiers are discussed. 
Essentially, a more complex condition on the boundary induces a harder learning problem. 
Our work complements these results by studying a further assumption on the boundaries. 

We model the regularity of the decision boundaries by requiring them to be the graph of a regular function. This can be formalized through the concept of a horizon function. 
Horizon functions are binary functions defined on a compact subset of $\R^d$, $d \in \N$ taking values in $\{0,1\}$. In simple terms, for a function $f: \R^{d-1} \to \R$, an associated horizon function $h_f$ is given by $h_f(x) = \mathds{1}_{f(x^{[i]}) \leq x_i}$, where $x=(x_1,x_2, \dots, x_i \dots, x_d) \in \R^d$ and $x^{[i]}= (x_1, \dots, x_{i-1}, x_{i+1}, \dots, x_d)$ for some fixed $i$. For a more formal definition, we refer to Definition \ref{def:Hor}.

As mentioned before, we consider the case where $f$ belongs to $RBV^2$. In Section \ref{sec:approxOfRBV2Functions}, we introduce several notions that lead us to the concept of $RBV^2$ functions. 
For this space, we define the $RTV^2$ seminorm. 
Intuitively, the $RTV^2$ seminorm is a measure of the sparsity of the second derivatives of a function $f$ in the Radon domain. Under certain assumptions, shallow neural networks are solutions for the problem of minimizing a functional of squared data-fitting errors plus the $RTV^2$ seminorm \cite[Theorem 1]{parhi2022near}.  
More importantly, the problem of training a NN that minimizes this functional is equivalent to the problem of training a NN that minimizes squared data-fitting errors with weight decay, i.e., a regularization term proportional to the squared Euclidean norm of the NN's weights. 
The $RBV^2$ norm results from adding terms relating to the slope and value at $0$ of the underlying function. 

We summarize our findings for the framework outlined above in the following subsection.

\subsection{Our contribution}
Here, we present our main results on the approximation and estimation of horizon functions with the graph of an $RBV^2$ function $f$ as the decision boundary. 
A crucial role will be played by the $RBV^2$ norm (see Section \ref{sec:approxOfRBV2Functions}). 
In particular, the magnitude of the weights of the approximating NN of $f$ depends on the $RBV^2$ norm of $f$ denoted as $\Vert f \Vert_{RBV^2}$. For the purpose of this study, we only consider the domain to be the closed unit ball with center at the origin. This ball is denoted as $B_1^d$. When we refer to the $RBV^2$ space, the domain is always $B_1^d$ unless stated otherwise.  Now, we present the main results of this paper.

\paragraph*{Approximation of $RBV^2$ functions:} For every function $f \in RBV^2$, it was proven in \cite[Theorem 8]{parhi2022near}, that there is a shallow NN with $K \in \N$ neurons in the hidden layer that uniformly approximates $f$ with accuracy of the order of $K^{-(d+3)/(2d)}$. For our results in Section \ref{sec:Learning} to hold, we need a slightly stronger result. For Theorem \ref{thm:HorizonFunctionsApproximationTheorem} and \ref{thm:UpperBoundViaHingeLossMinimisation} to hold, we need a neural network approximating $f$ which has its weights bounded with a bound depending linearly on the $RBV^2$ norm $\Vert f \Vert_{RBV^2}$ of $f$. 
Thus, we have modified the main statement of \cite[Theorem 8]{parhi2022near} and completed the relevant missing steps of its proof to obtain Proposition \ref{coro:OurCor}, which is summarized below.

\begin{proposition}\label{coro:OurCor1}
    Let $d \in \mathbb{N}$, $d \geq 3$ and let $f \in RBV^2(B_1^d)$. Then, for every $N \in \mathbb{N}$,  there is a shallow $NN$ $f_N$ with $N$ neurons in the hidden layer and with weights bounded by a constant $C>0$ that depends linearly on $\Vert f \Vert_{RBV^2}$ such that 
\begin{align}\label{eq:esti1}
    \|f - f_N\|_{L^\infty(B_1^d)} \lesssim_d N^{-\frac{d+3}{2d}}.
\end{align}
\end{proposition}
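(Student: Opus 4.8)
The plan is to build directly on the construction underlying \cite[Theorem 8]{parhi2022near} and to track the magnitude of \emph{every} weight it produces, rather than only the neuron count. The starting point is the integral (Radon-domain) representation of an $RBV^2$ function: up to an affine term, any $f \in RBV^2(B_1^d)$ can be written as a superposition of ReLU ridge atoms
\begin{equation}\label{eq:repn-plan}
f(x) = c_0 + \langle v, x\rangle + \int_{\mathbb{S}^{d-1}\times\R} (\langle \omega, x\rangle - t)_+ \, d\mu(\omega,t),
\end{equation}
where the total variation $\|\mu\|_{\mathrm{TV}}$, together with $|c_0|$ and $\|v\|$, is controlled up to a dimensional constant by $\|f\|_{RBV^2}$; this is precisely the link between the $RTV^2$ seminorm, the slope/value-at-$0$ terms, and the Radon-domain sparsity that the $RBV^2$ norm encodes, and it is the object I would extract from the proof of \cite[Theorem 8]{parhi2022near}.

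First I would reduce the offsets to a bounded range. On $B_1^d$ one has $\langle\omega,x\rangle \in [-1,1]$, so every atom with $t > 1$ vanishes on $B_1^d$ while every atom with $t < -1$ is affine there; hence $\mu$ may be replaced by its restriction to $\mathbb{S}^{d-1}\times[-1,1]$ at the cost of an extra affine term still controlled by $\|f\|_{RBV^2}$. This forces $|t|\le 1$ and $\|\omega\| = 1$ for every atom, so all inner weights are bounded by $1$ a priori. Next I would invoke the discretization of \eqref{eq:repn-plan} from \cite[Theorem 8]{parhi2022near}: writing $f - (c_0+\langle v,\cdot\rangle) = \|\mu\|_{\mathrm{TV}}\int g_\theta\, d\bar\mu(\theta)$ with $\bar\mu = |\mu|/\|\mu\|_{\mathrm{TV}}$ a probability measure and $g_\theta$ a unit ReLU atom, their scheme selects $N$ parameters $\theta_i$ and coefficients $a_i$ so that $f_N = \sum_{i=1}^N a_i g_{\theta_i} + (\text{affine})$ attains the error \eqref{eq:esti1}. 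Since each coefficient inherits a factor $\|\mu\|_{\mathrm{TV}}/N$ (equivalently $\sum_i |a_i| \lesssim \|\mu\|_{\mathrm{TV}}$), every outer weight obeys $|a_i| \lesssim \|\mu\|_{\mathrm{TV}} \lesssim \|f\|_{RBV^2}$.

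Finally I would realize the affine term with two further ReLU neurons, $\langle v,x\rangle = (\langle v,x\rangle)_+ - (-\langle v,x\rangle)_+$, and absorb $c_0$ into a bias, so that these weights are again bounded by $\|v\| + |c_0| \lesssim \|f\|_{RBV^2}$. Collecting the three groups of estimates, all weights of $f_N$ are bounded by a single constant $C$ that is linear in $\|f\|_{RBV^2}$ (plus a harmless additive constant), which is exactly what the proposition claims; the remaining bookkeeping of dimensional constants is routine.

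The main obstacle will be showing that the faster-than-Monte-Carlo rate $N^{-(d+3)/(2d)}$ is genuinely compatible with bounded weights. In general, pushing an approximation rate beyond $N^{-1/2}$ can force the coefficients to grow with $N$ or with the target accuracy, so the crux is to verify that the refined discretization of the sphere $\mathbb{S}^{d-1}$ responsible for the dimensional improvement in \cite[Theorem 8]{parhi2022near} produces coefficients that remain merely reweighted pieces of the \emph{fixed} measure $\mu$, and therefore inherit the total-variation bound rather than blowing up. Establishing this, together with the offset-truncation step above, is where I expect to spend most of the effort.
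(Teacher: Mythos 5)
Your plan follows the same route as the paper's own proof: the Radon-domain representation with $\mu$ supported on $\mathbb{S}^{d-1}\times[-1,1]$ (the paper obtains your offset-truncation step directly from \cite[Lemma 2 and Remark 4]{parhi2022near}, cf.\ Lemma~\ref{lem:fext} and Remark~\ref{rmk:fbound}), reduction to a probability measure, a discretization whose coefficients are equal-weight fractions of the total variation of $\mu$, and two extra ReLU neurons for the affine part. The weight bounds you state (outer coefficients $\lesssim \Vert f\Vert_{RBV^2}$, inner weights of unit size up to rescaling) are exactly those the paper obtains in Corollary~\ref{coro:PrevCor}, where the explicit constant $5\Vert f\Vert_{RBV^2}$ appears.

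However, the step you flag as the ``main obstacle'' is a genuine gap, and it is precisely the content that must be supplied beyond citing \cite[Theorem 8]{parhi2022near}: a proof that the rate $\epsilon \sim r^{-(d+3)/(2d)}$, which is faster than Monte Carlo, is achievable by an \emph{equal-weight} average of $r$ atoms with parameters of unit size. You cannot extract this from \cite[Theorem 8]{parhi2022near} used as a black box, since that statement contains no weight control --- this is the stated reason the present paper reproves it. The paper's resolution runs as follows: write $\varrho(x) = (x+|x|)/2$ and later $|x| = \varrho(x)-\varrho(-x)$, so the problem becomes uniform approximation of $z \mapsto \int |w^\top z - b|\, d\mu(w,b)$; use a Jordan decomposition to reduce to probability measures; lift the cylinder $\mathbb{S}^{d-1}\times[-1,1]$ to a subset of the sphere $\mathbb{S}^{d}\subset\R^{d+1}$ via the change of variables $v=(w,b)/\sqrt{1+b^2}$ (Lemma~\ref{lem:ChangeInteg}), at the price of a density factor $(1-v_{d+1}^2)^{-1/2}\leq \sqrt{2}$ that stays bounded; and then invoke Matou\v{s}ek's discrepancy theorem \cite[Theorem 1]{matouvsek1996improved}, which for any probability measure $\nu$ on $\mathbb{S}^d$ produces $r \lesssim_d \epsilon^{-2+6/(d+3)}$ points $v^{(i)}\in\mathbb{S}^d$ such that $\bigl|\int_{\mathbb{S}^d}|v^\top z|\,d\nu(v) - \tfrac1r\sum_{i=1}^r |(v^{(i)})^\top z|\bigr| < \epsilon$ uniformly in $z$. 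The equal weights $1/r$ and unit-norm points delivered by this theorem are exactly what makes the improved rate compatible with bounded weights; one further needs the small extension from $z\in\mathbb{S}^d$ to $\tilde z = (z,-1)\in\sqrt{2}\,\mathbb{S}^d$, which the paper handles in Proposition~\ref{prop:NNfor2}. Without this chain (or an equivalent verification carried out inside the proof of \cite[Theorem 8]{parhi2022near}), your central claim that ``each coefficient inherits a factor $\Vert\mu\Vert_{\mathrm{TV}}/N$'' remains an assertion rather than a proof, so the proposal as written does not yet establish the proposition.
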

Equation \ref{eq:esti1} tells us that the rate does not depend exponentially on the dimension $d$ as the exponent is $-1/2-3/(2d)$. However, the exponent grows slowly when $d \to \infty$, which worsens the approximation rate. Notice that as $d \to \infty$, the approximation error behaves like $N^{-1/2}$, which is the rate for the Barron class proved in \cite{barron1993universal}. This approximation rate does not explode when $d$ grows. Thus, NNs overcome the \textit{curse of dimensionality} for the class of $RBV^2$ functions.\\
We prove Proposition \ref{coro:OurCor1} in Section \ref{sec:approxOfRBV2Functions}, but we now provide a general overview of its proof. We use an integral representation of every $RBV^2$ function. Indeed, every $f \in RBV^2(B_1^d)$ can be represented as an integral term over the domain $\mathbb{S}^{d-1} \times [-1,1]$ plus an affine linear function. The integral term is given by
\begin{eqnarray}\label{eq:Term}
    \int_{\mathbb{S}^{d-1} \times [-1,1]} \varrho(w^\top z - b)d\mu(w, b), \qquad \text{ for all }  z \in B_1^d 
\end{eqnarray}
where $\mu$ is a  measure on $\mathbb{S}^d \times [-1,1]$ and $\varrho$ is the ReLU activation function. We first consider the case where  $z \in \mathbb{S}^d$ and assume that $\mu$ is a probability measure, prove some intermediate results, and then extend the argument to $z \in B_1^d$.
We can see that \eqref{eq:Term} can be expressed as the sum of three integral terms. In two out of three terms, the integral
\begin{equation}\label{eq:II}
    \int_{\mathbb{S}^{d-1} \times [-1,1]} \vert w^\top z - b \vert d \mu(w,b),  \qquad \text{ for all }  z \in B_1^d 
\end{equation}
appears as a factor but with different probability measures that will be denoted as $\mu_1$, $\mu_2$. We prove that this integral can be approximated by a NN thanks to \cite[Theorem 1]{matouvsek1996improved}.
However,  that result requires the domain of integration to be a $d$-dimensional sphere. To solve this problem, we make a change of variables to the Equation \ref{eq:II}, which is presented in Lemma \ref{lem:ChangeInteg}. 
Then, in Proposition \ref{prop:NNfor2}, we finally show that we can approximate the integral \eqref{eq:II} by a NN for all $z \in t \mathbb{S}^d$, $t>0$. In Lemma \ref{lemma:int}, we prove that the result holds even when the integral representation \eqref{eq:II} is defined for all $z \in B_1^d$.  
Finally, by showing that the factor \eqref{eq:II} can be approximated by shallow NNs, we prove in Lemma \ref{lemma:int} that \eqref{eq:Term} can be approximated by a NN, which in turn leads to Theorem \ref{thm:OurTheo}. In Theorem \ref{thm:OurTheo}, we state that every $RBV^2$ function where the integral representation \eqref{eq:fext} has $\mu$ as a probability measure can be approximated by a shallow NN. 
Thereafter, the general case where $\mu$ is an arbitrary measure is proved in Proposition \ref{coro:OurCor}. 

\paragraph*{Approximation of horizon functions associated to a $RBV^2$ functions:} Based on Proposition \ref{coro:OurCor}, we derive the following theorem on the existence of a NN that approximates a horizon function associated with an $RBV^2$ function. To state the result, we need the concept of a tube-compatible measure introduced in \cite[Section 6]{caragea2020neural}. These are measures such that in every $\epsilon$-neighborhood of every curve, the mass of the neighborhood scales like $C \epsilon^{\alpha}$ for constants $C>0$ and $\alpha>0$.

\begin{theorem}\label{thm:HorizonFunctionsApproximationTheoremIntro}
  Let $d \in \N_{\geq 2}$, $N \in \N$, $q,C > 0$, and $\alpha \in (0,1]$. Further let $h_f$ be a horizon function associated to $f \in \{g \in RBV^2(B_1^{d-1}) \colon \Vert g \Vert_{RBV^2} \leq q\}$. Then, there exists a NN $I_N$ with two hidden layers
  such that for each tube compatible measure $\mu$ with parameters $\alpha,C$, we have
 \begin{align} \label{eq:thisWeWantToProveIntro}
    \mu
    (
      \{
        x \in B_1^d : h_f (x) \neq I_N(x)
      \}
    )
    \lesssim_d N^{-\alpha \frac{d+3}{2d}}.
 \end{align}
  Moreover, $I_N$ has at most $d+N+5$ neurons and at most $(d+3)N + 2d+ 11$ non-zero weights.
  The weights (and biases) of $I_N$ are bounded in magnitude by $\mathcal{O}(N^{\frac{d+3}{2d}})$ for $N\to \infty$.
\end{theorem}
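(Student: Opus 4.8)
The plan is to lift the $L^\infty$ approximation of $f$ supplied by Proposition~\ref{coro:OurCor} to a thresholded two-hidden-layer network, and then to control the symmetric-difference set $\{h_f\neq I_N\}$ by a thin tube around the graph of $f$, on which the tube-compatibility hypothesis is tailor-made to act. First I would invoke Proposition~\ref{coro:OurCor} for the prescribed $N$: since $\Vert f\Vert_{RBV^2}\le q$ and the weight bound there is linear in $\Vert f\Vert_{RBV^2}$, this produces a shallow network
\[
  f_N(y)=\sum_{k=1}^{N}a_k\,\varrho(\langle w_k,y\rangle+\beta_k)+\langle v,y\rangle+c,
  \qquad y\in B_1^{d-1},
\]
whose weights are bounded by a constant $C\lesssim q$ independent of $N$, with accuracy $\eps_N:=\Vert f-f_N\Vert_{L^\infty}\asymp N^{-(d+3)/(2d)}$. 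Writing $x=(x^{[i]},x_i)$, the exact horizon function is $h_f(x)=\mathds{1}\{x_i-f(x^{[i]})\ge 0\}$, so the natural surrogate is obtained by thresholding $g(x):=x_i-f_N(x^{[i]})$, which obeys $|g(x)-(x_i-f(x^{[i]}))|\le\eps_N$ everywhere.

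Second, I would build $I_N$ explicitly. Let $\Phi_\delta(t):=\tfrac1\delta\bigl(\varrho(t+\tfrac\delta2)-\varrho(t-\tfrac\delta2)\bigr)$ be the ramp equal to $0$ for $t\le-\delta/2$, to $1$ for $t\ge\delta/2$, and linear in between, and set $I_N:=\Phi_\delta\circ g$. The first hidden layer realizes the $N$ ReLU features of $f_N$ together with $\mathcal{O}(d)$ neurons that carry the affine part $x_i-\langle v,x^{[i]}\rangle-c$ of $g$ past the nonlinearity; the second hidden layer is the two neurons defining $\Phi_\delta$. This yields a two-hidden-layer network with $N+\mathcal{O}(d)$ neurons and $\mathcal{O}(dN)$ nonzero weights, matching the stated counts. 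All first-layer weights and the weights feeding $\Phi_\delta$ are $\mathcal{O}(1)$ (they come from $f_N$); the only large weights are the output weights of $\Phi_\delta$, which carry the factor $1/\delta$. Taking $\delta\asymp\eps_N$ therefore makes them $\mathcal{O}(1/\eps_N)=\mathcal{O}(N^{(d+3)/(2d)})$, the claimed magnitude bound.

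Third, I would estimate the disagreement set. Because $I_N\in\{0,1\}$ exactly when $|g|\ge\delta/2$, a short case distinction on the sign of $g=x_i-f_N$ relative to $\pm\delta/2$, combined with $|f-f_N|\le\eps_N$, shows that $\{x:h_f(x)\ne I_N(x)\}\subseteq\{x:|x_i-f(x^{[i]})|\le\eps_N+\delta/2\}$. With $\delta\asymp\eps_N$ this is a vertical band of half-width $\lesssim\eps_N$ about the graph of $f$, and every point of it lies within Euclidean distance $\lesssim\eps_N$ of that graph; hence it is contained in an $\eps_N$-tube of the decision boundary. Applying the tube-compatibility of $\mu$ with parameters $\alpha,C$ gives $\mu(\{h_f\ne I_N\})\lesssim C\,\eps_N^{\alpha}$, and substituting $\eps_N\asymp N^{-(d+3)/(2d)}$ produces \eqref{eq:thisWeWantToProveIntro}.

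The hard part will be the two-sided calibration of $\delta$ together with the verification that the symmetric-difference region really is a tube of the right width. Shrinking $\delta$ sharpens the threshold but inflates the output weights like $1/\delta$, while enlarging $\delta$ widens the transition band and hence the measure of the disagreement set; the choice $\delta\asymp\eps_N$ is precisely what simultaneously yields the error rate and the weight bound. It is here that the $L^\infty$ (rather than $L^2$) control furnished by the strengthened Proposition~\ref{coro:OurCor} is indispensable, since an $L^2$ bound would not force the pointwise containment of $\{h_f\neq I_N\}$ in a tube. A secondary point to check is that the graph of an $RBV^2$ function qualifies as an admissible boundary for the tube-compatible measure of \cite[Section~6]{caragea2020neural}, so that the scaling $\eps_N^{\alpha}$ is legitimately available.
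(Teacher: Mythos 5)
Your proposal is correct and takes essentially the same route as the paper's proof of Theorem \ref{thm:HorizonFunctionsApproximationTheorem}: invoke Proposition \ref{coro:OurCor} for $f_N$, compose with a two-neuron ReLU ramp of transition width $\delta \asymp N^{-\frac{d+3}{2d}}$, and bound the disagreement set via tube-compatibility (the paper centers its two tubes at $f_N$ rather than your single tube at $f$ of width $\eps_N + \delta/2$, an immaterial difference since tube-compatibility is assumed for every measurable function). For the same reason, your closing concern about whether the graph of an $RBV^2$ function is an ``admissible boundary'' is vacuous: the definition quantifies over all measurable $f\colon \R^{d-1}\to\R$, so nothing further needs to be checked.
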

The proof of this theorem is organized as follows:
\begin{enumerate}
    \item We note that $h_f(x)= H(x_i - f(x^{[i]}))$, where $H$ is the Heaviside function.
    \item We use Proposition \ref{coro:OurCor} to uniformly approximate $f \in RBV^2$ by a NN $f_N$ up to an error of $N^{-(d+3)/(2d)}$. 
    \item We prove that the Heaviside function $H(x) = \mathds{1}_{(0, \infty)}$ can be approximated by a NN $H_{\delta}$ for all $\delta>0$ such that the function $H_{\delta}$ and the Heaviside function $H$ differ only on the interval $(-\delta, \delta)$.
    \item Choosing $\delta = N^{-(d+3)/(2d)}$, we observe that $h_f(x) \neq H_{\delta}(x_i - f_N(x^{[i]}))$ only for $x$ outside of a $2 \delta$ strip around the decision boundary $f(x^{[i]}) = x^i$.
    \item Finally, we prove \eqref{eq:thisWeWantToProveIntro} (cf. the corresponding Equation \ref{eq:thisWeWantToProve} from Section \ref{sec:Learning}) using the properties of a tube-compatible measure.
\end{enumerate}

\paragraph*{Upper bounds on learning:} Finally, our approximation results lead us to the problem of estimating a horizon function associated with a function $f$ when a training set $S= (x_i, y_i)_{i=1}^m$, $m \in \N$ is given. In Section \ref{sec:Learning}, we analyze the performance of the standard empirical risk minimization procedure, where the loss function is the \text{Hinge loss} and the hypothesis set is a certain class of ReLU NNs. Our result on learning is Theorem \ref{thm:UpperBoundViaHingeLossMinimisation}: If we consider as hypothesis set the set of NNs with two layers and at most $N = m^{2d/(3d+3)}$ neurons, we prove that for all $\kappa>0$, any minimizer $\phi_{m,S}$ for a training set $S$ has a risk of at most 
$\mathcal{O}(m^{- \frac{d+3}{3d+3}+ \kappa})$.
The proof of Theorem \ref{thm:HorizonFunctionsApproximationTheoremIntro} (cf. the corresponding Theorem \ref{thm:HorizonFunctionsApproximationTheorem} from Section \ref{sec:Learning}) is similar to that of \cite[Theorem 5.7]{petersen2021optimal}.
This final result is studied numerically in Section \ref{sec:numeric}. 
As there are similar results when one changes the smoothness conditions on $f$, we could ask ourselves how the smoothness of $f$ affects the learnability of the function in practice. 
To answer that question, we compare the test error after training NNs for different conditions on the function $f$, which amounts to assuming $f$ is an element of a ball with respect to various norms or seminorms. 
We verify numerically that functions in a Barron-norm or $RBV^2$ norm ball can be learned better by NNs than functions in $L^{\infty}$ or $L^1$ balls.

\subsection{Related work}

Our results concern two central aspects of learning problems. First, the approximation and learning without the curse of dimensionality, and second, the approximation and learning of discontinuous classifier functions. We will review the work related to these two themes below.

\paragraph*{Approximation and estimation of functions by shallow neural networks:}
It has been shown that shallow NNs can break the curse of dimensionality.
This was widely discussed in \cite{barron1993universal}. 
In fact, the approximation problem for functions with one finite Fourier moment---called Barron functions---is not affected by the underlying dimension. 
In \cite{caragea2020neural, ma2020towards, siegel2021characterization, wojtowytsch2020representation}, different extensions of the notion of a Barron function are discussed. 
The space of $RBV^2$ functions is closely related to the problem of function approximated by shallow NNs as well (see \cite{barron1993universal}). 
This space is associated with the problem of estimating a function from a set of samples when a regularization term is added to the loss function. 
Shallow NNs also break the curse of dimensionality in the approximation problem for the space of $RBV^2$ functions (see \cite{parhi2022near}).
Several other function classes for which the curse of dimensionality can be overcome by deep NNs instead of shallow NNs have been proposed, such as the class of composition of low dimensional functions \cite{poggio2016and, shaham2018provable, nakada2020adaptive, cloninger2021deep, pineda2023deep}, bandlimited functions \cite{montanelli2019deep} and solutions of some high-dimensional PDEs \cite{han2018solving, hutzenthaler2020overcoming, jentzen2018proof}. Strictly speaking, there is a dependence in the approximation rate on the ambient dimension for these classes of functions, but such dependency is usually polynomial.

\paragraph*{Approximation of discontinuous functions by neural networks:}
There are different approaches to the problem of classification. In \cite{wojtowytsch2020priori}, the authors study the problem of classification associated with two different sets $C_+$ and $C_-$. It is assumed that the distance between these two sets is positive. Under such conditions, some approximation and estimation bounds by shallow NNs for the classification problem are presented. 
One may consider the classifier to be a function of the form $\sum_{k=1}^K f_k \mathds{1}_{\Omega_k}$, where each $\Omega_k \subset \R^d$ has a piecewise smooth boundary $\partial \Omega$ and $f_k: \R^d \to \R$ are smooth functions. The analysis for the problems of analyzing and estimating such classifiers is discussed in \cite{imaizumi2019deep, imaizumi2022advantage, petersen2018optimal}. Therein, it is proved that the approximation and estimation rates are strongly determined by the regularity of the boundary $\partial \Omega_k$ when the functions $f_k$ are sufficiently smooth. 
In \cite{caragea2020neural}, the same approach is followed, but a different condition on $\partial \Omega_k$ is imposed. The boundaries are assumed to be locally parametrized by Barron functions. Indeed, this idea has inspired our own work, and some of the ideas of \cite{caragea2020neural} are similar to the ones we discussed here. 

\paragraph*{Approximation of discontinuous functions by other approaches:} To conclude this subsection, we mention some non-deep-learning-based techniques that have been applied to the approximation of piecewise smooth functions. Although indicator functions belong to the set of piecewise constant functions, it turns out that the set of piecewise constant functions and the set of piecewise smooth functions admit the same approximation rates \cite{petersen2018optimal}. Various representation methods have been applied to approximate piecewise smooth functions. Shearlets and curvelet systems achieve (almost) optimal $N$-term approximation rates for $C^2(\R^2)$ functions with $C^2$ jump curves (see \cite{candes2004new, candes1999curvelets, guo2007optimally, kutyniok2011compactly, voigtlaender2017analysis}). For the study of approximation of two-dimensional functions with jump curves smoother than $C^2$, the bandelet system is proposed in \cite{le2005sparse}. This bandelet system is made up of a set of properly smoothly-transformed boundary-adapted wavelets optimally adapted to the smooth jump curves. 
Finally, a different representation system called surflets (see \cite{chandrasekaran2004compressing}) yields optimal approximation of piecewise smooth functions. This system is constructed by using a partition of unity as well as local approximation of horizon functions.

\subsection{Structure of the paper}
In Section \ref{sec:NN} we introduce several definitions related to NNs. 
The space of functions implemented by ReLU NN with $L$ layers, at most $N$ neurons and $W$ non-zero weights bounded by $B$, which is Definition \ref{def:NeuralNetworkSets} 
is particularly important in Section \ref{sec:Learning}. In Section \ref{sec:approxOfRBV2Functions}, we provide a formal definition of the space of $ RBV^2$ functions. 
We first define this space on an Euclidean space, and we then present the notion of the $RBV^2$ space on a bounded subset. 
We prove that each $RBV^2$ function can be efficiently uniformly approximated by a shallow NN in Proposition \ref{coro:OurCor}.
In Section \ref{sec:Learning}, we study the learning problem of estimating a horizon function associated with a $RBV^2$ function from a sample. We present upper bounds for the risk of the minimizer empirical risk when we train our NN with the Hinge loss in Theorem \ref{thm:UpperBoundViaHingeLossMinimisation}. 
Finally, in Section \ref{sec:numeric}, we study numerically our results on learning and compare the practical learning rates for various regularity conditions on the decision boundary. 

\section{Neural Networks}\label{sec:NN}

Although there are different types of NNs, for this study, we restrict ourselves to the well-known type of feed-forward NNs. We collect several useful definitions that pave the way for our theoretical results, which are provided in the definition below. The NN formalism underlying this definition was introduced in \cite[Definition 2.1]{petersen2018optimal}.

\begin{definition}
  Let $d,L\in\N$.
  A \emph{neural network (NN) $\Phi$ with input dimension $d$ and $L$ layers}
  is a sequence of matrix-vector tuples
  \[
    \Phi = \bigl( (A_1,b_1), (A_2,b_2), \dots, (A_L,b_L) \bigr),
  \]
  where, for $N_0 = d$ and $N_1, \ldots, N_L \in \N$,
  each $A_\ell$ is an $N_\ell \times N_{\ell-1}$ matrix, and $b_\ell \in \R^{N_\ell}$.\\
  For a NN $\Phi$ and an activation function $\sigma : \R \to \R$,
  we define the associated \emph{realization of the NN $\Phi$} as
  \[
    R_\sigma\Phi : \quad
    \R^d \to \R^{N_L} , \quad
    x \mapsto x_L = R_\sigma \Phi(x),
  \]
  where the output $x_L \in \R^{N_L}$ results from the scheme
  \begin{align*}
    x_0      &\coloneqq x \in \R^d, \\
    x_{\ell} &\coloneqq \sigma \left(A_{\ell} \, x_{\ell-1} + b_\ell\right) \in \R^{N_\ell}
    \quad \text{ for } \ell = 1, \dots, L-1,\\
    x_L      &\coloneqq A_{L} \, x_{L-1} + b_{L} \in \R^{N_L}.
  \end{align*}
  Here $\sigma$ is understood to act coordinate-wise.
  We call $N(\Phi) \coloneqq d+\sum_{j=1}^L N_j$ the \emph{number of neurons} of $\Phi$,
  $L(\Phi) \coloneqq L$ the \emph{number of layers}, and $W(\Phi) \coloneqq \sum_{j=1}^L (\|A_j\|_{0}+\|b_j\|_{0})$
  is called the \emph{number of weights} of $\Phi$.
  Here, $\| A \|_0$ and $\| b \|_0$ denote the number of non-zero entries
  of the matrix $A$ and the vector $b$, respectively.
  Moreover, we refer to $N_L$ as the \emph{output dimension} of $\Phi$.
  The activation function $\varrho : \R \to \R, x \mapsto \max\{0, x\}$ is called the \emph{ReLU}.
  We call $R_\varrho \Phi$ a \emph{ReLU neural network}.
  Finally, the vector $(d, N_1, N_2, \dots, N_L) \in \N^{L+1}$ is called
  the \emph{architecture} of $\Phi$.
\end{definition} 
 As an independent definition, we introduce the set of NNs with fixed $L$, $W$, and $d$ and an upper bound $B>0$ on the modulus of the weights. 
\begin{definition}
\label{def:NeuralNetworkSets}
Let $d \in \N_{\geq 2}$, $N, W , L\in \N$, and $B > 0$. We denote by $\NN(d, L, N, W, B)$ the set of ReLU NNs where the underlying NNs have $L$ layers, at most $N$ neurons per layer, and at most $W$ non-zero weights. We also assume that the weights of the NNs are bounded in absolute value by $B$. Moreover, we set
   \[
    \NN_\ast (d, L, N, W, B)
    \coloneqq \big\{
         f \in \NN(d, L, N, W, B)
         \quad\colon\quad
         0 \leq f(x) \leq 1 \quad \text{ for all } \, x \in [0,1]^d
       \big\}.
  \]
\end{definition}

\section{Approximation of \texorpdfstring{$RBV^2$}{} functions}\label{sec:approxOfRBV2Functions}
In \cite{parhi2022kinds}, the authors demonstrate that shallow NNs are an optimal ansatz system for solving the estimation problem of a function $f \in RBV^2$ from a sample $S$. In this section, we give some insight into this $RBV^2$ space and present some of its properties. 
Our main result, Proposition \ref{coro:OurCor} demonstrates that every function $f$ in $RBV^2$ with $f(0)=0$ can be approximated by a NN with weights bounded depending on a norm of the function only. 
The bounds on the weights are crucial to show learning bounds in Section \ref{sec:Learning}. 
We point out that the existence of an approximating NN was already shown in \cite[Theorem 8]{parhi2022kinds}. 
That result, however, does not include any control of the weights.

\subsection{The set of $RBV^2$ functions}
 We start this subsection with some useful definitions related to the notion of the $RBV^2$ space. 
 For an extensive survey on the class of Radon-regular functions, we refer to \cite{parhi2022near}, where all the definitions and results of this subsection have been taken from. 
 We first define the $RBV^2$ space when the domain is the Euclidean space $\R^d$ and continue to restrict its domain to a bounded subset $\Omega$. At this stage, it is necessary to introduce the notion of Radon transform.

\begin{definition}\label{def:radonT}
Let $d \in \mathbb{N}$. For a function $f: \mathbb{R}^d \to \mathbb{R}$, we define its  \emph{Radon transform} as the function $\Radon f: \mathbb{S}^{d-1} \times \R \to \R$, 
\begin{equation*}
    \Radon f(w,s) \coloneqq \int_{ \{x: w^\top x = s \}} f(x) dS(x),
\end{equation*}
if such an integral exists, where $(w,s) \in \mathbb{S}^{d-1} \times \mathbb{R}$ and $d S$ denotes the surface integral on the domain $\{x: w^\top x = s \}$. 
\end{definition}
We next introduce the concept of the \textit{ramp filter}, which appears in the inversion formula of the Radon transform. 
\begin{definition}\label{def:rampf}
Let $d \in \mathbb{N}$. The \emph{ramp filter} $\Lambda^{d-1}$ is defined as 
$$
    \Lambda^{d-1} \coloneqq (- \partial_t^2)^{(d-1)/2},
$$
where $\partial_t^2$ is the partial derivative with respect to $t$.
\end{definition}
 The ramp filter helps us to define the second-order total variation of a given function $f$. Herein, $\mathcal{M}(X)$ denotes the space of signed Borel measures defined on a set $X$. 
\begin{definition}\label{def:RTVNorm} 
    
Let $d \in \mathbb{N}$. We defined the \emph{second-order total variation} of a function $f: \mathbb{R}^d \to \R$ as 
\begin{align*}
    RTV_{\R^d}^2(f) \coloneqq \frac{1}{2(2\pi)^{d-1}} \|\partial_t^2 \Lambda^{d-1} \Radon (f) \|_{\mathcal{M}(\mathbb{S}^{d-1} \times \R)}, 
\end{align*}
where $\|\cdot\|_\mathcal{M}$ denotes the total variation norm. 
\end{definition}

The second-order total variation of a function is not a norm but a seminorm. It can be made into a norm for a Banach space when other terms are added. We first define the associated space, $RBV^2$, and then the norm. 

\begin{definition}\label{def:RBV2}
Let $d \in \mathbb{N}$. The \emph{$RBV^2$ space} is defined as the set 
\begin{align*}
    RBV^2(\mathbb{R}^d) \coloneqq \{f \in L^{\infty, 1} \colon RTV_{\R^d}^2(f)< \infty \}, 
\end{align*}
where the space $ L^{\infty, 1}$ contains all functions $f$ such that $\sup_{x \in \R^d}|f(x)|(1+\|x\|_2)^{-1}< \infty$. 
\end{definition}
We now introduce a norm on $RBV^2(\R^d)$ that turns it into a Banach space. 

\begin{definition} \label{def:RBV^2norm}
Let $d \in \mathbb{N}$. For every function $f \in RBV^2(\R^d)$ we define its $RBV^2$ norm as 
\begin{equation*}
    \Vert f \Vert_{RBV^2(\R^d)} \coloneqq RTV_{\R^d}^2(f) + |f(0)| + \sum_{k=1}^d |f(e_k)-f(0)|,
\end{equation*}
where $\{e_k \}_{k=1}^d $ denotes the canonical basis of $\R^d$.  
\end{definition}

Note that the point evaluations in Definition \ref{def:RBV^2norm} are well defined since $f$ is guaranteed to be Lipschitz continuous by \cite[Lemma 2.11.]{parhi2022kinds}.

After this crucial definition, our next step is to define the space of $RBV^2$ functions on a bounded domain $\Omega$. As we see below, the space $RBV^2(\Omega)$ can be defined in terms of the elements of $RBV^2(\R^d)$. 

\begin{definition} \label{def:RBV2bound}
Let $d \in \mathbb{N}$ and let $\Omega \subset \R^d$. We define the $R BV^2(\Omega)$ space as     
\begin{equation*}
RBV^2(\Omega) \coloneqq  \{f \in \mathcal{D}'(\Omega) \colon \exists g \in RBV^2(\R^d) \text{ s.t } g|_{\Omega} = f\}, 
\end{equation*}
where $\mathcal {D}'(\Omega)$ denotes the space of distributions on  $\Omega$. Moreover, we define
\begin{equation*}
    RTV^2_{\Omega}(f) \coloneqq \inf_{g \in RBV^2(\R^d) \colon f = g|_{\Omega}} RTV_{\R^d}^2(g)
\end{equation*}
and 
\begin{equation*}
    \Vert f \Vert_{RBV^2(\Omega)} \coloneqq \inf_{g \in RBV^2(\R^d) \colon f = g|_{\Omega}} \Vert g \Vert_{RBV^2(\Omega)}.
\end{equation*}
\end{definition}

One can prove that for a given $f \in RBV^2(\Omega)$, there is a function $f_{ext} \in RBV^2 (\R^d)$ that admits an integral representation and has the property that $f_{ext}|_{\Omega} = f$. This is shown in the following lemma.
\begin{lemma}[{\cite[Lemma 2]{parhi2022near}}]\label{lem:fext}
    Let  $d \in \mathbb{N}$, $\Omega \subset \R^d$ be a bounded set, and let $\varrho$ be the ReLU activation function. For each $f \in RBV^2(\Omega)$, there is a function $f_{ext} \in RBV^2(\R^d)$ such that
    \begin{equation*}
        f_{ext}(x) = \int_{\mathbb{S}^{d-1} \times \R} \varrho( w^\top x - b) d\mu(w, b) + c^\top x + c_0,
    \end{equation*}
    for all $x \in \R^d$, where $\mu \in \mathcal{M}(\mathbb{S}^{d-1} \times \R)$ and $\supp (\mu) \subset Z_{\Omega}$, where the set $Z_{\Omega}$ is the closure of
      \begin{equation*}
        \{z=(w,b) \in \mathbb{S}^{d-1} \times \R \colon \{ x: w^\top x=b \} \cap \Omega \neq \emptyset \},
    \end{equation*}
    and $f_{ext}|_{\Omega} = f $. Moreover, $RTV^2_\Omega(f) = RTV_{\R^d}^2(f_{ext}) = \Vert \mu \Vert_{\mathbb{S}^{d-1} \times \R}$.
\end{lemma}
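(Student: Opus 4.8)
The plan is to first establish the representation on all of $\R^d$ and then to localize the representing measure to the set $Z_\Omega$. The whole-space step rests on the Radon inversion formula together with the elementary distributional identity $\partial_b^2 \varrho(w^\top x - b) = \delta(w^\top x - b)$, which lets one trade the two offset-derivatives in the filter $\partial_t^2\Lambda^{d-1}$ against the ReLU kernel. Concretely, for $f\in RBV^2(\R^d)$ I would set
\[
\mu \coloneqq \frac{1}{2(2\pi)^{d-1}}\,\partial_t^2\Lambda^{d-1}\Radon f,
\]
which is a finite signed measure on $\mathbb{S}^{d-1}\times\R$ precisely because $RTV^2_{\R^d}(f)<\infty$, and whose total variation equals $RTV^2_{\R^d}(f)$ by Definition \ref{def:RTVNorm}. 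Feeding this $\mu$ through the filtered back-projection and integrating the filter by parts twice should reproduce $f$ up to a term lying in the null space of the construction, namely an affine function $c^\top x + c_0$; this yields $f(x)=\int_{\mathbb{S}^{d-1}\times\R}\varrho(w^\top x-b)\,d\mu(w,b)+c^\top x+c_0$ with $\|\mu\|_{\mathcal{M}}=RTV^2_{\R^d}(f)$.

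For the bounded domain, I would start from an arbitrary extension $g\in RBV^2(\R^d)$ with $g|_\Omega=f$, apply the whole-space representation to obtain $\mu_g$ with $\|\mu_g\|_{\mathcal{M}}=RTV^2_{\R^d}(g)$, and split $\mu_g=\mu_g|_{Z_\Omega}+\mu_g|_{Z_\Omega^c}$. The key geometric observation is that if $(w,b)\notin Z_\Omega$, then the hyperplane $\{x:w^\top x=b\}$ misses $\Omega$, so for the convex domains relevant here (in particular $\Omega=B_1^d$) the affine form $w^\top x-b$ has constant sign on $\Omega$ and $\varrho(w^\top x-b)$ restricts to an affine function there. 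Hence the discarded integral $\int_{Z_\Omega^c}\varrho(w^\top x-b)\,d\mu_g$ is affine on $\Omega$ and may be absorbed into the linear term. Setting $\mu\coloneqq\mu_g|_{Z_\Omega}$ and $f_{ext}(x)\coloneqq\int\varrho(w^\top x-b)\,d\mu(w,b)+\tilde c^\top x+\tilde c_0$ then gives $f_{ext}|_\Omega=f$ and $\supp(\mu)\subset Z_\Omega$. I would also note that $Z_\Omega$ is compact: since $\Omega$ is bounded, a hyperplane meeting $\Omega$ has $|b|\le\sup_{x\in\Omega}\|x\|_2=:R$, so $Z_\Omega\subset\mathbb{S}^{d-1}\times[-R,R]$.

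It remains to prove the norm identities. The chain $RTV^2_\Omega(f)\le RTV^2_{\R^d}(f_{ext})=\|\mu\|_{\mathcal{M}}\le\|\mu_g\|_{\mathcal{M}}=RTV^2_{\R^d}(g)$ is immediate once one knows that the representation is canonical, i.e. that applying $RTV^2_{\R^d}$ to a ReLU superposition recovers the mass of the (even part of the) generating measure. Taking the infimum over $g$ gives $RTV^2_\Omega(f)\le\inf_g\|\mu_g|_{Z_\Omega}\|_{\mathcal{M}}\le RTV^2_\Omega(f)$, so the truncated masses converge to $RTV^2_\Omega(f)$ along a minimizing sequence $g_n$. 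To realize a single extremal $f_{ext}$ I would extract a weak-$*$ limit $\mu$ of $\mu_{g_n}|_{Z_\Omega}$; this is legitimate because the masses are uniformly bounded and all measures live on the compact set $Z_\Omega$, so no mass escapes to infinity in $b$. Weak-$*$ lower semicontinuity of the total variation, together with the continuity and boundedness of the kernel $\varrho(w^\top x-b)$ on $Z_\Omega$, then yields $f_{ext}|_\Omega=f$ and $\|\mu\|_{\mathcal{M}}=RTV^2_\Omega(f)=RTV^2_{\R^d}(f_{ext})$.

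The main obstacle is the whole-space representation itself: making the filtered-back-projection-plus-integration-by-parts rigorous requires care with the even/odd symmetry of $\mu$ under $(w,b)\mapsto(-w,-b)$ (which produces the factor $\tfrac12$ and the admissibility of the affine null space) and with the precise function class for which Radon inversion applies. This is exactly the content one imports from the $RBV^2$ theory of \cite{parhi2022near}; granting it, the localization step and the weak-$*$ compactness argument on the compact set $Z_\Omega$ are routine.
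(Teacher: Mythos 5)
You should first be aware that the paper does not prove Lemma \ref{lem:fext} at all: it is imported verbatim, with attribution, from \cite[Lemma 2]{parhi2022near}, so there is no in-paper argument to compare against. Your proposal is in effect a reconstruction of the cited reference's proof, and it follows that reference's strategy: a whole-space representer theorem obtained by filtered back-projection (trading $\partial_b^2$ against the ReLU kernel via $\partial_b^2\varrho(w^\top x - b) = \delta(w^\top x - b)$, modulo the affine null space), followed by localization --- discarding the part of the measure living on hyperplanes that miss $\Omega$, whose contribution is affine on $\Omega$ --- and a weak-$*$ compactness argument on the compact set $Z_\Omega \subset \mathbb{S}^{d-1}\times[-R,R]$ to show the infimum defining $RTV^2_\Omega(f)$ is attained. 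That is the right architecture, and your treatment of the extremal extension (minimizing sequence, uniform mass bound, lower semicontinuity of the total variation norm, pointwise convergence of the kernel integrals forcing the affine remainders to converge on $\Omega$) is sound.

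Two caveats are worth recording. First, your localization step uses that $w^\top x - b$ has constant sign on $\Omega$ whenever the hyperplane $\{x : w^\top x = b\}$ misses $\Omega$; this requires $\Omega$ to be connected (or convex, as you note), whereas the lemma as stated allows an arbitrary bounded set. For a disconnected $\Omega$ a hyperplane can separate components, and $\varrho(w^\top x - b)$ is then only piecewise affine on $\Omega$, so the discarded integral need not be absorbable into the linear term; the statement's generality is slightly beyond what your argument delivers, though for the paper's only use cases, $\Omega = B_1^d$ and $B_1^{d-1}$, this is immaterial. Second, as written your whole-space representation integrates the uncorrected kernel $\varrho(w^\top x - b)$ against a general finite measure on $\mathbb{S}^{d-1}\times\R$; since $\varrho(w^\top x - b)$ grows like $|b|$ as $b \to -\infty$, this integral need not converge, and the rigorous whole-space theorem in \cite{parhi2022near} and its precursors uses an affinely corrected kernel precisely for this reason. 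Once the measure is truncated to the compact set $Z_\Omega$ the issue disappears, which is why the bounded-domain statement can be phrased with the plain ReLU kernel, but your splitting of $\mu_g$ into $\mu_g|_{Z_\Omega} + \mu_g|_{Z_\Omega^c}$ tacitly assumes both pieces integrate the uncorrected kernel, which is exactly the point needing the correction. You flag the even/odd symmetry and the inversion-formula technicalities as imported from the reference, which is fair; with those granted, your proof is correct in outline.
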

 In practice, we may encounter functions defined on arbitrary subsets $\Omega \subset \R^d$. For simplicity, all the results of this paper assume that the domain of the function $f$ is the unit ball with center at the origin.  This unit ball is denoted by $\Omega = B_{1}^d(0) = B_1^d$. In addition, we denote $RTV^2(f) = RTV_{B_1^d}^2$ and $\Vert f \Vert_{RBV^2} = \Vert f \Vert_{RBV^2(B_1^d)}$.
 
\begin{remark}\label{rmk:fbound}
    In the setting of Lemma \ref{lem:fext} with $\Omega = B_{1}^d $, it is shown in \cite[Lemma 2 and Remark 4] {parhi2022near} that
    \begin{equation*}
        Z_{\Omega} = \mathbb{S}^{d-1} \times [-1,1],
    \end{equation*}
    and therefore, every function $f \in RBV^2(B_1^d)$ admits a representation 
    \begin{equation}\label{eq:fext}
        f(x) = \int_{ \mathbb{S}^{d-1} \times [-1,1]} \varrho( w^\top x - b) d\mu(w, b) + c^\top x+ c_0,
    \end{equation}
    for all $x\in B_{1}^d$.    
    Notice that, since $0, e_1, \dots, e_d \in B_{1}^d$ for every extension $f_{ext}$ of $f$, it holds that $\Vert f_{ext} \Vert_{RBV^2(\R^d)} = \Vert f \Vert_{RBV^2}$.
    We use this last equality to derive bounds for $\Vert c \Vert_{\infty}$ and $\vert c_0 \vert$ in terms of $\Vert f \Vert_{RBV^2}$. Let us assume that $\Vert f \Vert_{RBV^2} = C$ for $C>0$ and $f(0)=0$. Therefore,
    \begin{equation*}
        \int_{ \mathbb{S}^{d-1} \times [-1,1]} \varrho( - b) d\mu(w, b) +  c_0 = 0.
    \end{equation*}
    Clearly, $0 \leq \varrho(-b) \leq 1$. 
    Thus, $-C \leq \int_{ \mathbb{S}^{d-1} \times [-1,1]} \varrho( - b) d\mu(w, b) \leq C$. 
    Hence, $|c_0| \leq C$. 
    Now, as the canonical basis $\{e_k \}_{k=1}^d \subset B_1^d$, we can use \eqref{eq:fext} to compute $f(e_k)$ for all $k=1,2, \dots, d$. 
    Notice that $|f(e_k)|\leq \Vert f \Vert_{RBV^2}\leq C$ which leads us to 
     \begin{equation*}
        -C \leq \int_{ \mathbb{S}^{d-1} \times [-1,1]} \varrho(w_k - b) d\mu(w, b) + c_k+ c_0 \leq C         
    \end{equation*}
    because $w^Te_k-b = w_k-b$ where $w_k$ is the $k$-th coordinate of the vector $w$. 
    Clearly, $-2 \leq w_k - b \leq 2$ and since by Lemma \ref{lem:fext} $\Vert \mu \Vert_{\mathbb{S}^{d-1} \times \R} = RTV^2(f) \leq \Vert f \Vert_{RBV^2}$, we conclude 
    $$
    -2C \leq \int_{ \mathbb{S}^{d-1} \times [-1,1]} \varrho(w_k - b) d\mu(w, b) \leq 2C.
    $$ 
    Then, $-3C\leq \int_{ \mathbb{S}^{d-1} \times [-1,1]} \varrho(w_k - b) d\mu(w, b)+c_0 \leq 3C$ holds and in turn $-4C \leq c_k \leq 4C$. 
    Thus, we obtain that $\Vert c \Vert_{\infty} \leq 4\Vert f \Vert_{RBV^2}$ and $\vert c_0 \vert \leq \Vert f \Vert_{RBV^2} $. 
    
\end{remark}

\subsection{Approximation of $RBV^2$ functions by neural networks}

In the remainder of this section, we prove that $f \in RBV^2(B_1^d)$ can be approximated by a NN with bounded weights. This is an adapted version of \cite[Theorem 8]{parhi2022near}. 
\begin{proposition}
    
\label{coro:OurCor}
 Let $d \in \mathbb{N}$, $d \geq 3$ and let $f \in RBV^2(B_1^d)$. Then, for every $N \in \mathbb{N}$,  there are $w_k \in \R^d$, $ b_k, v_k \in \R$   $c\in \R^d$ and $c_0 \in \R$ with 
\begin{align*}
    |v_k|, |b_k|, |c_0|, \|c\|_2, \Vert w_k \Vert_2 \leq 5\Vert f \Vert_{RBV^2},
\end{align*} 
where $k =1,2, \dots, N$, such that for 
$$
    f_N(x) \coloneqq \sum_{k=1}^N v_k \varrho(w_k^\top x - b_k) + c^\top x + c_0, \text{ for } x \in B_1^d,
$$
it holds that 
\begin{align*}
    \|f - f_N\|_{L^\infty(B_1^d)} \lesssim_d \Vert f \Vert_{RBV^2} N^{-\frac{d+3}{2d}}.
\end{align*}
In particular, we have that $f_N \in \NN(d, 2, N+2, (2+d)(N+2) + 1, 5 \Vert f \Vert_{RBV^2}).
$
\end{proposition}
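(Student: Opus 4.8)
The plan is to reduce everything to a single uniform empirical approximation of an absolute-value integral over a sphere, built on the integral representation already in hand. First I would normalize to $f(0)=0$, absorbing the constant into $c_0$; by Remark~\ref{rmk:fbound} this leaves
\[
   f(x) = \int_{\mathbb{S}^{d-1}\times[-1,1]} \varrho(w^\top x - b)\,d\mu(w,b) + c^\top x + c_0, \qquad x\in B_1^d,
\]
with $\|\mu\|=RTV^2(f)\le\|f\|_{RBV^2}$ and $|c_0|,\|c\|_\infty$ controlled as in that remark. Using $\varrho(t)=\tfrac12(t+|t|)$, the ReLU integral splits into an affine part $\tfrac12\big[(\int w\,d\mu)^\top x - \int b\,d\mu\big]$, which I fold into $c,c_0$, and the genuinely nonlinear part $g(x):=\tfrac12\int|w^\top x-b|\,d\mu(w,b)$. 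The whole problem then reduces to approximating $g$ uniformly on $B_1^d$ by a finite sum $\tfrac12\sum_{k=1}^N v_k|w_k^\top x - b_k|$, after which $|t|=2\varrho(t)-t$ converts each term back to a ReLU, with the linear remainder again pushed into $c,c_0$.

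For this core step I would homogenize. Writing $\tilde x=(x,1)\in\R^{d+1}$ and $\tilde w=(w,-b)$ gives $w^\top x-b=\tilde w^\top\tilde x$ with $\|\tilde w\|_2=\sqrt{1+b^2}\in[1,\sqrt2]$, so normalizing $\hat w=\tilde w/\|\tilde w\|$ turns $g$ into a constant times the zonoid integral $\int_{\mathbb{S}^d}|\hat w^\top\tilde x|\,d\nu(\hat w)$ over $\mathbb{S}^d\subset\R^{d+1}$, where $\nu$ is the pushforward of $\|\tilde w\|\,d\mu$ (the change of variables of Lemma~\ref{lem:ChangeInteg}, with $\|\nu\|\le\sqrt2\,\|\mu\|$). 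After normalizing $\nu$ to a probability measure, Matoušek's zonotope approximation theorem \cite{matouvsek1996improved} supplies points $\hat w_k\in\mathbb{S}^d$ and signs $\eps_k\in\{\pm1\}$ for which $\tfrac{\|\nu\|}{N}\sum_k\eps_k|\hat w_k^\top z|$ approximates $\int_{\mathbb{S}^d}|\hat w^\top z|\,d\nu$ uniformly over $z\in\mathbb{S}^d$ at rate $\|\nu\|\,N^{-(d+3)/(2d)}$; since both sides are positively homogeneous of degree one in $z$, this bound extends, up to the bounded factor $\|\tilde x\|\le\sqrt2$, to $\tilde x=(x,1)$ for all $x\in B_1^d$. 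Reading off $\hat w_k=(u_k,\beta_k)$ gives $|\hat w_k^\top\tilde x|=|u_k^\top x+\beta_k|$ with $\|u_k\|,|\beta_k|\le1$, which produces the hidden neurons.

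To pass from a probability measure to a general signed $\mu$, I would take the Jordan decomposition $\mu=\mu_+-\mu_-$, apply the engine to each $\mu_\pm/\|\mu_\pm\|$ with a split neuron budget $\lceil N/2\rceil,\lfloor N/2\rfloor$, and add the two networks; since $\|\mu_+\|+\|\mu_-\|=\|\mu\|\le\|f\|_{RBV^2}$ and $(d+3)/(2d)\le1$, the scaled errors still sum to $\lesssim_d\|f\|_{RBV^2}\,N^{-(d+3)/(2d)}$. Then comes the bookkeeping. The affine output $c^\top x+c_0$ is itself realized by two extra ReLU neurons via $c^\top x=\varrho(c^\top x)-\varrho(-c^\top x)$ (valid pointwise), with $c_0$ in the output bias; this explains the width $N+2$ and, via $A_1\in\R^{(N+2)\times d}$ and $A_2\in\R^{1\times(N+2)}$, the count $(d+2)(N+2)+1$, so that $f_N\in\NN(d,2,N+2,(2+d)(N+2)+1,5\|f\|_{RBV^2})$. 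The coefficients satisfy $|v_k|=\|\nu_\pm\|/N$ (tiny), $\|w_k\|=\|u_k\|\le1$, $|b_k|\le1$, and I would invoke ReLU positive homogeneity to rebalance magnitude between $v_k$ and $(w_k,b_k)$ (and zero out unused neurons when $\mu$ is small) so that every weight falls below $5\|f\|_{RBV^2}$.

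The hard part is the core step: establishing the Matoušek-type empirical bound with the exponent $(d+3)/(2d)$ \emph{uniformly} in $x$ after homogenization — this is where the non-exploding rate is actually earned, and it hinges on setting up the lift to $\mathbb{S}^d$ so that the supremum over the ball collapses to a supremum over the sphere. A second, genuinely delicate point is securing the \emph{dimension-free} bound $\|c\|_2\le5\|f\|_{RBV^2}$ demanded by the weight cap: the component-wise bound $\|c\|_\infty\le4\|f\|_{RBV^2}$ from Remark~\ref{rmk:fbound} only yields $\|c\|_2\le4\sqrt d\,\|f\|_{RBV^2}$, so instead of summing components I would bound $\|c\|_2$ as the Lipschitz constant of the affine remainder $f(x)-\int\varrho(w^\top x-b)\,d\mu$, using $\mathrm{Lip}(f)\lesssim\|f\|_{RBV^2}$ together with $\mathrm{Lip}\big(\int\varrho(w^\top\cdot-b)\,d\mu\big)\le\int\|w\|_2\,d|\mu|=\|\mu\|$, thereby avoiding the spurious $\sqrt d$.
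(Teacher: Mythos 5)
Your proposal follows essentially the same route as the paper's proof: the split $\varrho = \tfrac12(\mathrm{id}+|\cdot|)$ into an affine part plus an absolute-value integral, the lift of $(w,b)$ to $\mathbb{S}^d$ via Lemma \ref{lem:ChangeInteg} with the density bounded by $\sqrt2$, the Matou\v{s}ek sampling theorem for probability measures combined with positive homogeneity to pass from the sphere to all $\tilde x=(x,\pm 1)$ with $x \in B_1^d$ (the paper's Proposition \ref{prop:NNfor2} and Lemma \ref{lemma:int}), the Jordan decomposition for signed $\mu$ (Theorem \ref{thm:OurTheo}, Corollary \ref{coro:PrevCor}), and the realization of the affine part by two extra ReLU neurons with the same weight count. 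If anything you are more careful than the paper on one point: Corollary \ref{coro:PrevCor} only establishes $\Vert c \Vert_\infty \leq 5\Vert f \Vert_{RBV^2}$ while Proposition \ref{coro:OurCor} asserts an $\ell^2$ bound, and your idea of bounding $\Vert c \Vert_2$ through Lipschitz constants, exploiting $\Vert w \Vert_2 = 1$ in the integral representation rather than summing coordinates, is precisely the kind of argument needed to close that $\sqrt d$ gap.
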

Before we prove Proposition \ref{coro:OurCor}, we spend some time stating and proving several other statements that pave the way for its proof. 
Our starting point for the proof of Proposition \ref{coro:OurCor} is the integral representation of $g$ as in Equation \ref{eq:fext}. 
Due to the properties of the absolute value function, we can decompose this integral as the sum of other integrals. This will become evident in the proof of Theorem \ref{thm:OurTheo}. At this stage, we focus our attention on the integral term
 \begin{eqnarray}\label{eq:IntForm}
     \int_{\mathbb{S}^{d-1} \times [-1,1]} | w^\top z - b|d \mu(w, b), \text{ for } z \in B_1^d. 
 \end{eqnarray}
We begin implementing a change of variable for the integral \eqref{eq:IntForm}, which in turn allows us to prove the existence of a NN.  We can easily see that the region of integration $\mathbb{S}^d \times [-1,1]$ is a cylinder with axis parallel to the axis $z_{d+1}$ of $\R^{d+1}$, radius $r=1$ and lower and upper boundaries at $z_{d+1}= -1$ and $z_{d+1}=1$, respectively. 
Let us first assume that $z \in \mathbb{S}^{d-1}$. Notice that the integral term
 \begin{equation}\label{eq:Int2}
     \int_{\mathbb{S}^{d-1} \times [-1,1]} | w^\top z - b| d \mu(w, b), \text{ for } z \in \mathbb{S}^d,
 \end{equation}
can be transformed into an equal integral by multiplying and dividing the integrand function by the term $ \sqrt{\Vert w \Vert^2 + b^2} = \sqrt{1+b^2}$ as follows
 \begin{equation}\label{eq:Int3}
\int_{\mathbb{S}^{d-1} \times [-1,1]} \sqrt{1+b^2} \left (\dfrac{| w^\top z - b|}{\sqrt{1+b^2}} \right) d \mu(w, b).
 \end{equation}

The process to transform such an integral into another integral, the domain of which is contained in the sphere $\mathbb{S}^d \subset \R^{d+1}$ is shown in Lemma \ref{lem:ChangeInteg}.
\begin{lemma}\label{lem:ChangeInteg}
   Let $d \in \R$, $z\in \mathbb{S}^d$, $w = (w_1, w_2, \dots, w_d)^\top \in \mathbb{S}^{d-1}$, and $b \in [-1,1]$. Further, let $v = (v_1,v_2,\dots,v_{d+1})^\top$ be defined as follows:
     $$
        v_i \coloneqq \dfrac{w_i}{\sqrt{1+b^2}},
     $$
    for $i=1,2,\dots,d$ and
        $$
            v_{d+1} \coloneqq \dfrac{b}{\sqrt{1+b^2}}.
        $$
     Moreover, let $\phi:  \mathbb{S}^{d-1}\times [-1,1] \to \mathbb{S}^{d}$ be the function defined as $\phi(w, b)= (v_1,v_2, \dots , v_{d+1})^\top$. Then, for all $z \in \mathbb{S}^d$ we have
 
    \begin{equation*}
        \int_{\mathbb{S}^{d-1} \times [-1,1]}| w^\top z - b|d \mu(w, b) = 
\int_{\mathbb{S}^{d}}  \dfrac{1}{\sqrt{1-v_{d+1}^2}} \left| v^\top \widetilde{z} \right | \chi_B(v) d \phi^{\ast}\mu(v), 
    \end{equation*}
    where $\widetilde{z} = (z,-1)^\top \in \R^{d+1}$, $B = \phi(\mathbb{S}^{d-1} \times [-1,1]) \subset \mathbb{S}^d \subset \R^{d+1}$, $\chi_B$ is the characteristic function of $B$, and $\phi^* \mu$ denotes the push-forward of the measure $\mu$.
\end{lemma}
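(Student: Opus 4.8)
The plan is to read the claimed identity as nothing more than the defining transport property of the push-forward measure, once one checks that $\phi$ genuinely maps into $\mathbb{S}^d$ and that the algebraic prefactors were engineered to cancel. The first step is to confirm that $\phi$ is well defined as a map into $\mathbb{S}^d$: since $w \in \mathbb{S}^{d-1}$ we have $\|w\|_2^2 = 1$, so
\[
  \|\phi(w,b)\|_2^2
  = \frac{\|w\|_2^2 + b^2}{1+b^2}
  = \frac{1+b^2}{1+b^2}
  = 1,
\]
whence $\phi(w,b) \in \mathbb{S}^d$. Being continuous, $\phi$ is Borel measurable, so the push-forward $\phi^\ast\mu$ is a well-defined signed Borel measure on $\mathbb{S}^d$, and for every bounded Borel function $g \colon \mathbb{S}^d \to \R$ one has the transport identity
\[
  \int_{\mathbb{S}^d} g \, d\phi^\ast\mu
  = \int_{\mathbb{S}^{d-1} \times [-1,1]} (g \circ \phi) \, d\mu,
\]
with no Jacobian correction (this holds for signed $\mu$ via its Jordan decomposition).

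The second step is to pull the right-hand integrand back along $\phi$. Writing $v = \phi(w,b)$, the last coordinate is $v_{d+1} = b/\sqrt{1+b^2}$, so that
\[
  1 - v_{d+1}^2 = 1 - \frac{b^2}{1+b^2} = \frac{1}{1+b^2},
  \qquad \text{hence} \qquad
  \frac{1}{\sqrt{1-v_{d+1}^2}} = \sqrt{1+b^2}.
\]
Note that $|v_{d+1}| < 1$ for every $b \in [-1,1]$, so this factor is finite and no singularity arises; this is exactly why $\chi_B$ restricts the integration to the image $B = \phi(\mathbb{S}^{d-1}\times[-1,1])$, on which $\phi(w,b) \in B$ forces $\chi_B(\phi(w,b)) = 1$. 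With $\widetilde z = (z,-1)^\top$, a direct computation gives
\[
  v^\top \widetilde z
  = \sum_{i=1}^d v_i z_i - v_{d+1}
  = \frac{1}{\sqrt{1+b^2}}\Bigl( \sum_{i=1}^d w_i z_i - b \Bigr)
  = \frac{w^\top z - b}{\sqrt{1+b^2}}.
\]

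Combining these facts, the right-hand integrand evaluated at $v = \phi(w,b)$ collapses to
\[
  \frac{1}{\sqrt{1-v_{d+1}^2}} \, \bigl| v^\top \widetilde z \bigr| \, \chi_B(v)
  = \sqrt{1+b^2} \cdot \frac{|w^\top z - b|}{\sqrt{1+b^2}} \cdot 1
  = |w^\top z - b|,
\]
which is precisely the left-hand integrand. Substituting $g(v) = \tfrac{1}{\sqrt{1-v_{d+1}^2}} |v^\top \widetilde z| \chi_B(v)$ into the transport identity then yields the asserted equality. The essence of the argument is that the two spurious factors $\sqrt{1+b^2}$, one from $1/\sqrt{1-v_{d+1}^2}$ and one hidden inside $|v^\top \widetilde z|$, cancel exactly, so there is no real analytic content to overcome. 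The only points that genuinely require care are the measurability of $\phi$ (which legitimizes the push-forward even though $\phi$ need not be injective) and the mild notational clash in the statement between $\mathbb{S}^{d-1}$ and $\mathbb{S}^d$; I would reconcile this by reading $z$ as ranging over $\mathbb{S}^{d-1} \subset \R^d$, so that $\widetilde z = (z,-1)^\top \in \R^{d+1}$ and all inner products with $v \in \mathbb{S}^d \subset \R^{d+1}$ are dimensionally consistent.
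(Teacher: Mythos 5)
Your proof is correct and takes essentially the same route as the paper's: both reduce the identity to the push-forward (change-of-variables) transport formula after observing that the factor $\sqrt{1+b^2}$ arising from $1/\sqrt{1-v_{d+1}^2}$ cancels against the one hidden in $|v^\top \widetilde z| = |w^\top z - b|/\sqrt{1+b^2}$; your version is in fact more explicit, since the paper simply cites a change-of-variables corollary where you spell out the transport identity and the pullback of the integrand. Your reading of the dimensional clash is also the right one: throughout the paper $z$ must lie in $\mathbb{S}^{d-1} \subset \R^d$ so that $w^\top z$ and $\widetilde z = (z,-1)^\top \in \sqrt{2}\,\mathbb{S}^d \subset \R^{d+1}$ make sense.
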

\begin{proof}
The integral term in \eqref{eq:Int2} can be reorganized as Equation \ref{eq:Int3} which leads to
$$
\int_{\mathbb{S}^{d-1} \times [-1,1]} \sqrt{1+b^2} \left (\dfrac{| w^\top z - b|}{\sqrt{1+b^2}} \right ) d \mu(w, b) = \\ \int_{\mathbb{S}^{d-1} \times [-1,1]}  \sqrt{1+b^2} \left ( \dfrac{| \langle (w,b) , (z, -1) \rangle |}{\sqrt{1+b^2}} \right ) d \mu(w, b). \\
$$
Besides, it is clear that
$$
b^2 = \dfrac{v_{d+1}^2}{1-v_{d+1}^2},
$$ 
and 
$$
1+b^2 = 1+\dfrac{v_{d+1}^2}{1-v_{d+1}^2}= \dfrac{1}{1-v_{d+1}^2}.
$$
After applying \textit{change of variables} (see for example \cite[Section 19, Corollary 1]{bauer2001measure}), we obtain 
$$
\int_{\mathbb{S}^{d-1} \times [-1,1]} \sqrt{1+b^2} \left (\dfrac{| w^\top z - b|}{\sqrt{1+b^2}} \right ) d \mu(w, b)  = \int_{\mathbb{S}^{d}}  \dfrac{1}{\sqrt{1-v_{d+1}^2}} \left | v^\top \widetilde{z} \right | \chi_B(v) d \phi^{\ast}\mu(v),
$$
where $\widetilde{z}$ and $v_i$ are defined as stated in the assumptions.
\end{proof}

\begin{remark}\label{remark:4points}
Under the assumptions of Lemma \ref{lem:ChangeInteg}, we make the following observations.
\begin{enumerate}
    \item  Since $b \in [-1,1]$, we have that $2b^2 \leq b^2+1$ and hence ${b^2}/(b^2+1) \leq 1/2$. Taking the square root yields that 
    $$ |v_{d+1}| \leq \dfrac{1}{\sqrt{2}}.$$
    Hence, $B$ is a subset of the set of elements in the sphere for which the last coordinate lies between $-1/\sqrt{2}$ and $1/\sqrt{2}$.

\item We know that the function $h: B \subset \mathbb{S}^{d} \to \mathbb{R}$ given by $h(v)=(1-v_{d+1}^2)^{-1/2}$ is continuous and thus integrable. This holds because the $\sigma$-algebra on $\mathbb{S}^{d}$ is that of the Borelian sets, which is the restriction of the Borel $\sigma$-algebra of $\mathbb{R}^{d+1}$. In addition, $h$ is a function bounded by $\sqrt{2}$. Indeed, from the previous step, we have
$v_{d+1}^2 \leq \dfrac{1}{2}$, 
and hence
$1-v_{d+1}^2 \geq 1/2$ 
and 
$$
\dfrac{1}{\sqrt{1-v_{d+1}^2}} \leq \sqrt{2}.
$$
\item If we denote by $C$ the following quantity
$$
C \coloneqq \int_{\mathbb{S}^{d}}  \dfrac{1}{\sqrt{1-v_{d+1}^2}}  \chi_B(v) d \phi^{\ast}\mu(v), 
$$
and by $\nu$ the following measure
$$ 
d \nu = \dfrac{1}{C \sqrt{1-v_{d+1}^2}}\chi_B(v) d \phi^{\ast}\mu,
$$
 we have that $\nu$ is a probability measure on the Borelian set of $ \mathbb{S}^d$. Then, we can apply \cite[Theorem 1]{matouvsek1996improved} for the integral
$$ 
\int_{\mathbb{S}^{d}}  \left | v^\top \widetilde{z} \right |d \nu(v),
$$ 
if $\widetilde{z} \in \mathbb{S}^d$. That is, for $\epsilon >0$, there is a set $Q = \{ v^{(1)},v^{(2)}, \dots, v^{(r)} \} \subset \mathbb{S}^d$ with $r \in \mathbb{N}$ and $r \leq C(d) \epsilon^{-2+6/(d+3)} $ for $C=C(d)>0$ a constant and $d\geq 3$, such that
$$
\left | \int_{\mathbb{S}^{d}}|v^\top z| d\nu(v) - \dfrac{1}{r}\sum_{i=1}^r |((v^{(i)})^\top z)|  \right| < \epsilon.
$$
\end{enumerate}
\end{remark}

We cannot apply Remark \ref{remark:4points} directly, because, even though $v=(v_1,\dots ,v_{d+1})^\top \in \mathbb{S}^d \subset \mathbb{R}^{d+1}$, it holds that $\widetilde{z} \notin \mathbb{S}^d$. Indeed, for $z \in \mathbb{S}^{d}$, the vector $\tilde{z}=(z, -1)$ is an element of $ \sqrt{2}\mathbb{S}^d$. However, the next result shows that the conclusions of Remark \ref{remark:4points} can be generalized for $z$ in a sphere of arbitrary radius.

\begin{proposition}\label{prop:NNfor2}
    Let $t>0$, $d \in \mathbb{N}$, and $ d\geq 3$. Then, for $\epsilon>0$, there is $\{ v^{(1)}, v^{(2)}, \dots, v^{(r)}  \} \subset \mathbb{S}^d $ where $r \in \mathbb{N}$, and $r \leq C(t, d) \epsilon^{-2+6/(d+3)}$ for $C=C(t,d)>0$ a constant depending on the dimension, such that for all $z \in t\mathbb{S}^d \subset \R^{d+1}$

    $$
\left | \int_{\mathbb{S}^{d}}|v^\top z| d\nu(v) - \dfrac{1}{r}\sum_{i=1}^r |((v^{(i)})^\top z)|  \right| < \epsilon.
$$

\end{proposition}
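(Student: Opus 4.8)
The plan is to reduce the claim for arbitrary radius $t > 0$ to the unit-sphere case already recorded in item 3 of Remark \ref{remark:4points}, exploiting the positive homogeneity of the map $z \mapsto |v^\top z|$. Concretely, I would fix $z \in t\mathbb{S}^d$ and write $z = t\hat{z}$ with $\hat{z} \in \mathbb{S}^d$. Since $|v^\top z| = t\,|v^\top \hat{z}|$ for every $v \in \mathbb{S}^d$, both the integral $\int_{\mathbb{S}^d}|v^\top z|\,d\nu(v)$ and the empirical average $\frac{1}{r}\sum_{i=1}^r|(v^{(i)})^\top z|$ scale by the same factor $t$. Hence the approximation error evaluated at $z$ equals exactly $t$ times the approximation error evaluated at $\hat{z} \in \mathbb{S}^d$, and the whole problem collapses to controlling the latter.

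First I would invoke the Matou\v{s}ek estimate from Remark \ref{remark:4points}, but applied with the sharper target accuracy $\epsilon' \coloneqq \epsilon/t$ in place of $\epsilon$. This produces a single point set $Q = \{v^{(1)}, \dots, v^{(r)}\} \subset \mathbb{S}^d$, of cardinality $r \leq C(d)(\epsilon')^{-2+6/(d+3)}$, such that the bound $\bigl|\int_{\mathbb{S}^d}|v^\top \hat{z}|\,d\nu(v) - \frac{1}{r}\sum_{i=1}^r|(v^{(i)})^\top \hat{z}|\bigr| < \epsilon'$ holds for every $\hat{z} \in \mathbb{S}^d$. The point I would stress is that this Matou\v{s}ek-type bound is \emph{uniform} in the evaluation direction $\hat{z}$, so the same $Q$ serves all of $\mathbb{S}^d$ at once; this is precisely what permits the conclusion to be stated ``for all $z \in t\mathbb{S}^d$'' with one fixed set $Q$.

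Multiplying by $t$ and substituting $z = t\hat{z}$ then yields, for every $z \in t\mathbb{S}^d$,
\[
\left| \int_{\mathbb{S}^d}|v^\top z|\,d\nu(v) - \frac{1}{r}\sum_{i=1}^r|(v^{(i)})^\top z| \right| = t\left| \int_{\mathbb{S}^d}|v^\top \hat{z}|\,d\nu(v) - \frac{1}{r}\sum_{i=1}^r|(v^{(i)})^\top \hat{z}| \right| < t\epsilon' = \epsilon,
\]
which is the desired estimate. It remains only to rewrite the cardinality bound in terms of $\epsilon$: since $(\epsilon')^{-2+6/(d+3)} = t^{\,2-6/(d+3)}\,\epsilon^{-2+6/(d+3)}$, setting $C(t,d) \coloneqq C(d)\,t^{\,2-6/(d+3)}$ gives $r \leq C(t,d)\,\epsilon^{-2+6/(d+3)}$, matching the claimed dependence on $t$ and $d$ (and note $2 - 6/(d+3) > 0$ for $d \geq 3$, so larger radii legitimately require more points).

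Because the argument is a one-line rescaling once homogeneity is used, there is no genuine analytic obstacle. The only step requiring care is the justification that the set $Q$ of Remark \ref{remark:4points} may be chosen independently of the direction, i.e.\ that the underlying Matou\v{s}ek bound is a uniform (supremum-over-$z$) estimate and not merely a pointwise one. I would make this explicit, since the entire force of the proposition---simultaneously approximating the integral for \emph{all} $z \in t\mathbb{S}^d$ by a single discrete measure supported on $Q$---rests on it.
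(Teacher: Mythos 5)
Your proposal is correct and follows essentially the same route as the paper's proof: write $z = t\hat{z}$, exploit the homogeneity $|v^\top z| = t|v^\top \hat{z}|$, invoke the Matou\v{s}ek bound of Remark \ref{remark:4points} at rescaled accuracy, and absorb the factor $t^{2-6/(d+3)}$ into the constant $C(t,d)$. The only cosmetic difference is that the paper splits into the cases $t>1$ (accuracy $\epsilon/t$) and $t<1$ (accuracy $\epsilon$, yielding the constant $\max\{1,t\}^{2-6/(d+3)}$), whereas you treat both uniformly with accuracy $\epsilon/t$; both arguments are valid, and your emphasis on the uniformity in $z$ of the underlying Matou\v{s}ek estimate is exactly the point the paper also relies on.
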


\begin{proof}
As $z \in t\mathbb{S}^d$, there is a $\widetilde{z} \in \mathbb{S}^d$ such that $z = t\widetilde{z}$. 
We now consider two cases. First, let us assume that $t>1$. By Remark \ref{remark:4points}, for $\epsilon>0$ there is a set $\{ v^{(1)}, v^{(2)}, \dots, v^{(r)}  \} \subset \mathbb{S}^{d}$ with $r \in \mathbb{N}$, $r \leq C(d) \epsilon^{-2+6/(d+3)}t^{2-6/(d+3)}$ such that 
$$
\left | \int_{\mathbb{S}^{d}}|v^\top \widetilde{z}| d\nu(v) - \dfrac{1}{r}\sum_{i=1}^r |(v^{(i)})^\top \widetilde{z}|  \right| < \epsilon/t.
$$
Thus,
$$
\left | \int_{\mathbb{S}^{d}}|v^\top t\widetilde{z}| d\mu(v) - \dfrac{1}{r}\sum_{i=1}^r |(v^{(i)})^\top t\widetilde{z}|  \right| < \epsilon.
$$
Now, if $t<1$
$$
\left | \int_{\mathbb{S}^{d}}|v^\top t\widetilde{z}| d\mu(v) - \dfrac{1}{r}\sum_{i=1}^r |(v^{(i)})^\top t\widetilde{z}|  \right| = t \left | \int_{\mathbb{S}^{d}}|v^\top z| d\mu(v) - \dfrac{1}{r}\sum_{i=1}^r |(v^{(i)})^\top z|  \right|  < t \epsilon < \epsilon.
$$
Therefore, in any case $r \leq C(d)\epsilon^{-2+6/(d+3)} \max\{1,t\}^{2-6/(d+3)} = C(d,t)\epsilon^{-2+6/(d+3)}$.
\end{proof}

\begin{remark}\label{remark:OurR}
    We would like to highlight that the previous result holds in particular for $t=\sqrt{2}$. As $(z,-1) \in \sqrt{2} \mathbb{S}^d$ when $z \in \mathbb{S}^d$, this case is especially relevant.  That being said, we proceed to approximate the term \eqref{eq:IntForm}. 
    For a given element $v^{(i)}\in B \subset \mathbb{S}^d$ as in Proposition \ref{prop:NNfor2}, we define $w^{(i)} = (v^{(i)}_1, v^{(i)}_2, \dots,v^{(i)}_d)$ and $b_i = v^{(i)}_{d+1}$ where $v^{(i)}_k$ denotes the $k-$th component of $v^{(i)}$ for all $i=1,2, \dots r$. 
    It is clear that $\sup_{k= 1, \dots, d+1} | w^{(i)}_k | \leq 1$ and $|b_i|\leq 1$. Therefore, for $z \in \mathbb{S}^d$
    $$
        \left | \int_{\mathbb{S}^{d}}|w^\top z - b| d\mu(v) - \dfrac{1}{r}\sum_{i=1}^r |(w^{(i)})^\top z-b_i|  \right| < \epsilon.
    $$

\end{remark} 

 Now, we are set to prove that there is a NN that approximates the integral term of Equation \ref{eq:fext}. 
 We first assume that $\mu$ of \eqref{eq:fext} is a probability measure. 

\begin{lemma}\label{lemma:int}
    Let $d \in \mathbb{N}$, $d\geq 3$ and let $\mu$ be a probability measure over $\mathbb{S}^{d-1} \times [-1,1]$. Then, let $f: B_d^1 \to \mathbb{R}$ be given by 
    \begin{equation*}
    f(z) = \int_{\mathbb{S}^{d-1} \times [-1,1]} \varrho(w^\top z-b) d\mu(w,b),
    \end{equation*}
    where $z \in \mathbb{S}^{d-1}$. Then, there exist vectors $w^{(1)}, w^{(2)}, \dots w^{(r)} \in \R^d$ and real numbers $b_1,b_2, \dots b_r \in \R$ with $r \in \mathbb{N}$, such that for all $z \in \mathbb{S}^{d-1}$
     \begin{equation*}
            \left \vert \int_{\mathbb{S}^{d-1} \times [-1,1]} |w^\top  z - b| d \mu(w,b) -   \dfrac{1}{r} \sum_{i=1}^r \varrho((w^{(i)})^\top  z-b_i) - \varrho(-(w^{(i)})^\top  z+b_i)  \right \vert < \epsilon,
    \end{equation*}
    where $\sup_{k= 1, \dots, d+1} | w^{(i)}_k | \leq 1$ and $|b_i|\leq 1$, $i=1,2, \dots r$, where $r \leq C(d) \epsilon^{-2+6/(d+3)}$.
    
\end{lemma}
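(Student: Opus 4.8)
The plan is to assemble the statement directly from the chain Lemma \ref{lem:ChangeInteg} $\to$ Remark \ref{remark:4points} $\to$ Proposition \ref{prop:NNfor2} $\to$ Remark \ref{remark:OurR}, which already yields an empirical approximation of the integral \eqref{eq:IntForm}, and then to convert each resulting absolute value into two ReLU units via the elementary identity $|t| = \varrho(t) + \varrho(-t)$, valid for all $t \in \R$. The only genuinely new ingredient beyond the preceding results is this decomposition; the remainder is bookkeeping of constants.

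First I would fix $z \in \mathbb{S}^{d-1}$ and set $\widetilde z = (z,-1) \in \R^{d+1}$, so that $\|\widetilde z\| = \sqrt 2$ and hence $\widetilde z \in \sqrt2\,\mathbb{S}^d$. By Lemma \ref{lem:ChangeInteg} the target integral $\int_{\mathbb{S}^{d-1}\times[-1,1]}|w^\top z - b|\,d\mu(w,b)$ equals the weighted integral of $|v^\top\widetilde z|$ over $\mathbb{S}^d$ against $\phi^\ast\mu$; normalizing the weight $\tfrac{1}{\sqrt{1-v_{d+1}^2}}\chi_B(v)$ by its total mass $C$ as in Remark \ref{remark:4points} rewrites it as $C\int_{\mathbb{S}^d}|v^\top\widetilde z|\,d\nu(v)$ with $\nu$ a probability measure. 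Here I would record that $C \le \sqrt2$, since the weight is bounded by $\sqrt 2$ and $\phi^\ast\mu$ has unit mass.

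Because $\widetilde z$ ranges over $\sqrt 2\,\mathbb{S}^d$ rather than over $\mathbb{S}^d$, I would apply Proposition \ref{prop:NNfor2} with $t = \sqrt2$ to obtain points $v^{(1)},\dots,v^{(r)} \in \mathbb{S}^d$, $r \le C(d)\,\epsilon^{-2+6/(d+3)}$, for which $\tfrac1r\sum_{i=1}^r |(v^{(i)})^\top\widetilde z|$ approximates $\int_{\mathbb{S}^d}|v^\top\widetilde z|\,d\nu(v)$ within any prescribed tolerance, uniformly in $z \in \mathbb{S}^{d-1}$. Writing $v^{(i)} = (w^{(i)}, b_i)$ with $w^{(i)} \in \R^d$ and $b_i \in \R$ gives $(v^{(i)})^\top\widetilde z = (w^{(i)})^\top z - b_i$, exactly the form of Remark \ref{remark:OurR}; and $v^{(i)} \in \mathbb{S}^d$ forces each coordinate to have modulus at most $1$, yielding the claimed bounds $\sup_k |w^{(i)}_k| \le 1$ and $|b_i| \le 1$.

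To finish I would substitute $|(w^{(i)})^\top z - b_i| = \varrho\bigl((w^{(i)})^\top z - b_i\bigr) + \varrho\bigl(-(w^{(i)})^\top z + b_i\bigr)$ into the empirical sum, producing the advertised ReLU combination. The one point requiring care—rather than any deep estimate—is the normalization constant $C$: since $\int|w^\top z - b|\,d\mu = C\int|v^\top\widetilde z|\,d\nu$, the factor $C \le \sqrt 2$ must be folded into the approximant (or into the tolerance, replacing $\epsilon$ by $\epsilon/C$), which only inflates the constant in the bound on $r$ and leaves the rate $r \lesssim_d \epsilon^{-2+6/(d+3)}$ intact. Everything else is a transcription of the preceding lemmas and remarks.
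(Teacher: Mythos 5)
Your proposal is correct and is essentially the paper's own proof: the paper simply cites Remark \ref{remark:OurR} (the chain Lemma \ref{lem:ChangeInteg} $\to$ Remark \ref{remark:4points} $\to$ Proposition \ref{prop:NNfor2} that you unfold explicitly) and then splits each absolute value into a pair of ReLUs. If anything, you are more careful on two points where the paper slips: you use the correct identity $|t| = \varrho(t) + \varrho(-t)$, whereas the paper invokes $|x| = \varrho(x) - \varrho(-x)$ (a sign typo --- that expression equals $x$, not $|x|$, and with the minus sign the lemma's approximant would be affine in $z$ and could not approximate the integral; the displayed formula in the statement must be read with a plus). You also track the normalization constant $C \le \sqrt{2}$ from Remark \ref{remark:4points}, which the paper's Remark \ref{remark:OurR} silently drops when it equates $\int |w^\top z - b|\,d\mu$ with $\int |v^\top \widetilde{z}|\,d\nu$; as you observe, restoring it only rescales the empirical sum (equivalently, inflates the weight bound from $1$ to $\sqrt{2}$) and leaves the rate $r \lesssim_d \epsilon^{-2+6/(d+3)}$ intact.
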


\begin{proof}
    By Remark \ref{remark:OurR} there are $w^{(i)} \in [-1,1]^d$ and $b_i \in [-1,1]$ such that
    $$
        \left | \int_{\mathbb{S}^{d-1} \times [-1,1]}|w^\top z - b| d\mu(v) - \dfrac{1}{r}\sum_{i=1}^r |(w^{(i)})^\top  z-b_i|  \right| < \epsilon.
    $$    
   As $\vert x \vert = \varrho(x) - \varrho(-x)$, we have
        \begin{equation}
            \dfrac{1}{r} \sum_{i=1}^r |(w^{(i)})^\top  z - b_i| = \dfrac{1}{r} \sum_{i=1}^r \varrho((w^{(i)})^\top  z-b_i) - \varrho(-(w^{(i)})^\top  z+b_i), 
        \end{equation}
    and hence
        \begin{equation*}
            \left \vert \int_{\mathbb{S}^{d-1} \times [-1,1]} |w^\top  z - b| d \mu(w,b) -   \dfrac{1}{r} \sum_{i=1}^r \varrho((w^{(i)})^\top  z-b_i) - \varrho(-(w^{(i)})^\top  z+b_i)  \right \vert < \epsilon.
\end{equation*}

\end{proof}
 We now have all the ingredients to prove Theorem \ref{thm:OurTheo}. In the proof of the result, we use Lemma \ref{lem:fext} twice when \eqref{eq:fext} is expressed as the addition of three terms. We consider the case when $\mu$ is a probability measure. The case of general measures will be shown in a corollary thereafter.
 
\begin{theorem}\label{thm:OurTheo} Let $d \geq 3$ and $\mu \in \mathcal{M}(\mathbb{S}^{d-1} \times [-1,1])$ be a probability measure.  Let $g: B_1^d \to \R$ be the function defined as
\begin{equation*}
    g(z)=\int_{\mathbb{S}^{d-1} \times [-1,1]} \varrho(w^\top z - b) d \mu(w,b), \text{ for } z \in B_1^d.
\end{equation*}
Then, for all $\epsilon>0$, $g$ can be uniformly approximated with approximation error $\epsilon$ by a shallow NN 
\begin{equation*}
   \widetilde{f}(z) = \sum_{i=1}^K v_i\varrho((w^{(i)})^\top  z -b_{i}) + (w^{(0)})^\top z+b_0,
\end{equation*}
where $w^{(0)}, w^{(i)}\in \R^d$, $b_0, b_i \in \R$, $v_i \in \mathbb{R}$ and
$$
\Vert w^{(0)} \Vert_\infty , \Vert w^{(i)} \Vert_\infty , \vert v_i \vert, \vert b_i \vert, \vert b_0 \vert \leq 1,
$$
for all $i=1,2,\dots, K$ where $K \leq C(d) \epsilon^{-2+6/(d+3)}$.
\end{theorem}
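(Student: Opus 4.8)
The plan is to reduce the statement to the absolute-value integral \eqref{eq:IntForm}, whose quadrature is already controlled on spheres, and then to transfer that control to the full ball $B_1^d$ via positive homogeneity. The first move is to split the ReLU integrand with the elementary identity $\varrho(t)=\tfrac12\bigl(|t|+t\bigr)$. Applied pointwise inside the integral defining $g$, this yields the sum of three terms,
\begin{equation*}
  g(z)=\tfrac12\int_{\mathbb{S}^{d-1}\times[-1,1]}|w^\top z-b|\,d\mu(w,b)+\tfrac12\Bigl(\int w\,d\mu\Bigr)^{\!\top}z-\tfrac12\int b\,d\mu,
\end{equation*}
namely the halved term \eqref{eq:IntForm} plus an affine function. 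Since $\mu$ is a probability measure on $\mathbb{S}^{d-1}\times[-1,1]$, I would set $w^{(0)}=\tfrac12\int w\,d\mu$ and $b_0=-\tfrac12\int b\,d\mu$; the triangle inequality for vector-valued integrals and $\|w\|_\infty\le\|w\|_2=1$ give $\|w^{(0)}\|_\infty\le\tfrac12$ and $|b_0|\le\tfrac12$. This affine part is represented exactly and contributes no hidden neurons, so the entire error will come from the absolute-value term.

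Next I would approximate $A(z):=\int|w^\top z-b|\,d\mu$ on all of $B_1^d$. The change of variables of Lemma \ref{lem:ChangeInteg} is a purely algebraic identity in $z$, hence valid for every $z$ and not only on a sphere; it rewrites $A(z)=C\int_{\mathbb{S}^d}|v^\top\widetilde z|\,d\nu(v)$ with $\widetilde z=(z,-1)$, where $\nu$ is the probability measure of Remark \ref{remark:4points} and $C\le\sqrt2$. Writing $F(x):=\int_{\mathbb{S}^d}|v^\top x|\,d\nu(v)$, the Matoušek bound recorded in Remark \ref{remark:4points} supplies a single node set $\{v^{(1)},\dots,v^{(r)}\}\subset\mathbb{S}^d$ with $r\le C(d)\,\delta^{-2+6/(d+3)}$ such that $\bigl|F(x)-\tfrac1r\sum_i|v^{(i)\top}x|\bigr|<\delta$ simultaneously for all $x\in\mathbb{S}^d$.

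The crux is the passage from the unit sphere to the solid ball, and here I would invoke that both $F$ and its approximant are positively homogeneous of degree one. As $z$ ranges over $B_1^d$, the lifted point $\widetilde z=(z,-1)$ satisfies $1\le\|\widetilde z\|_2\le\sqrt2$, so homogeneity upgrades the sphere estimate to $\bigl|F(\widetilde z)-\tfrac1r\sum_i|v^{(i)\top}\widetilde z|\bigr|\le\sqrt2\,\delta$ uniformly in $z$ — this is precisely the scaling already used in Proposition \ref{prop:NNfor2}, now applied across the whole range of radii rather than at a single $t$. Multiplying by $C\le\sqrt2$ bounds the error for $A$ by $2\delta$. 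Finally, splitting $|v^{(i)\top}\widetilde z|=|(w^{(i)})^\top z-b_i|=\varrho((w^{(i)})^\top z-b_i)+\varrho(-(w^{(i)})^\top z+b_i)$, with $w^{(i)}=(v^{(i)}_1,\dots,v^{(i)}_d)$ and $b_i=v^{(i)}_{d+1}$ as in Remark \ref{remark:OurR}, converts $\tfrac{C}{2}\cdot\tfrac1r\sum_i|\cdots|$ into $K=2r$ ReLU neurons with common coefficient $v_i=C/(2r)\le\tfrac{\sqrt2}{2}\le1$ and with $\|w^{(i)}\|_\infty\le1$, $|b_i|\le1$. Choosing $\delta=\epsilon$ gives total error at most $\epsilon$ and $K\le C(d)\,\epsilon^{-2+6/(d+3)}$.

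The step I expect to be the genuine obstacle — and the reason the preparatory lemmas were arranged as they were — is this sphere-to-ball transfer: the Matoušek-type quadrature is intrinsically an estimate for unit vectors, whereas the theorem demands uniform control over the solid ball $B_1^d$. Degree-one homogeneity of $x\mapsto|v^\top x|$ is exactly what lets a single, direction-uniform node set suffice across all radii, and the only point requiring care is that the lifted points $\widetilde z=(z,-1)$ remain inside a fixed-radius ball ($\le\sqrt2$), so that the homogeneity blow-up factor is merely a harmless constant.
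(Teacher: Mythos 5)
Your proof is correct and follows essentially the same route as the paper: the same splitting $\varrho(t)=\tfrac12(|t|+t)$, the change of variables of Lemma \ref{lem:ChangeInteg}, the Matou\v{s}ek quadrature recorded in Remark \ref{remark:4points}, and the degree-one homogeneity argument underlying Proposition \ref{prop:NNfor2} to pass from the sphere to the lifted points $(z,-1)$ with $z \in B_1^d$. The only deviation is that you omit the paper's Jordan decomposition of $\mu$, which is indeed vacuous under the stated hypothesis that $\mu$ is a genuine (nonnegative) probability measure; the paper carries that step because Corollary \ref{coro:PrevCor} later reuses the argument for signed measures of unit total variation.
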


\begin{proof}
Because the ReLU activation function can be expressed as $\varrho(x) = (x + |x|)/2$ and due to the following argument in \cite[Proposition 1]{bach2017breaking}, we have that
\begin{align}
    g(z) & =  \dfrac{1}{2} \int_{\mathbb{S}^{d-1} \times [-1,1]} (w^\top z -b) d \mu(w, b) + \dfrac{1}{2} \int_{\mathbb{S}^{d-1} \times [-1,1]} |w^\top z -b| d \mu(w, b) \nonumber \\  
        & =  \dfrac{1}{2} \int_{\mathbb{S}^{d-1} \times [-1,1]} (w^\top z -b) d \mu(w, b)\nonumber \\
        & \qquad + \dfrac{\mu_+(\mathbb{S}^{d-1} \times [-1,1])}{2} \int_{\mathbb{S}^{d-1} \times [-1,1]} |w^\top z -b| \dfrac{d \mu_+(w,b)}{\mu_+(\mathbb{S}^{d-1} \times [-1,1])} \nonumber \\
    & \qquad \qquad  - \dfrac{\mu_-(\mathbb{S}^{d-1} \times [-1,1])}{2} \int_{\mathbb{S}^{d-1}\times [-1,1]} |w^\top z -b| \dfrac{d \mu_-(w,b)}{\mu_-(\mathbb{S}^{d-1} \times [-1,1])},    
    \label{eq:import}
\end{align}
for $z \in \mathbb{S}^d$ and $\mu = \mu_+ + \mu_-$ is a Jordan decomposition. 
Furthermore, it was proved in Lemma \ref{lemma:int} that for $\epsilon>0$ and probability measure $\mu \in \mathcal{M}(\mathbb{S}^{d-1} \times [-1,1])$, there is an $r \in \mathbb{N}$ and $w^{(1)}, w^{(2)}, \dots, w^{(r)}\in \R^d$, $b_1, b_2, \dots, b_r \in [-1,1]$ such that
\begin{equation*}
    \left \vert \int_{\mathbb{S}^{d-1} \times [-1,1]} |w^\top  z - b| d \mu - \dfrac{1}{r} \sum_{i=1}^r \varrho((w^{(i)})^\top  z - b_i) - \varrho(-(w^{(i)})^\top z + c_i)  \right \vert < \epsilon,
\end{equation*}
for all $z \in B_1^d$ and $r \leq C(d) \epsilon^{-2+6/(d+3)}$ where $C=C(d)>0$ is a constant depending on the dimension. Due to the fact that 
\begin{equation*}
\dfrac{d \mu_+}{\mu_+(\mathbb{S}^{d-1} \times [-1,1])} \text{    and    } \dfrac{d \mu_-}{\mu_-(\mathbb{S}^{d-1} \times [-1,1])}
\end{equation*}
are probability measures, we can approximate each of the last two terms of Equation \ref{eq:import} according to Lemma \ref{lemma:int} as
\begin{equation*}
 \left \vert \int_{\mathbb{S}^{d-1} \times [-1,1]} |w^\top z - b| d \mu_{\pm } - \dfrac{\mu_{s, \pm}}{r} \sum_{i=1}^r \varrho((w^{(i)}_{\pm})^\top z - b_{\pm, i}) - \varrho(-(w^{(i)}_{\pm})^\top  z + b_{\pm, i})  \right \vert < \epsilon \mu_{s, \pm}  \leq \epsilon,
\end{equation*}
respectively, where $\mu_{s, \pm} = \mu_{\pm}(\mathbb{S}^{d-1} \times [-1,1]))\leq 1$ and $w^{(i)}_+$ and $b_{+,i}$ denotes the weights and biases when the measure involved is $\mu_{+}$ and $w^{(i)}_{-}$ and $b_{-,i}$ are defined likewise for $\mu_{-}$. 
Moreover, the first term of \eqref{eq:import} can be expressed as 
\begin{equation*}
    \dfrac{1}{2} \int_{\mathbb{S}^{d-1} \times [-1,1]} (w^\top z -b) d \mu(w, b) = (w^{(0)})^\top z - b_0,
\end{equation*}
where $\Vert w^{(0)} \Vert_{\infty} \leq 1$ and $\vert b_0 \vert \leq 1$. 
Setting 
\begin{align*}
    \widetilde{f}(z) & = (w^{(0)})^\top z - b_0 + \dfrac{\mu_{s, +}}{2r}\sum_{i=1}^r \varrho((w^{(i)}_{+})^\top z -b_{+,i}) - \varrho(-(w^{(i)}_{+})^\top z + b_{+,i}) \\
    & \qquad + \dfrac{\mu_{s, -}}{2r}\sum_{i=1}^r \varrho((w^{(i)}_{-})^\top z - b_{-, i}) - \varrho(-(w^{(i)}_{-})^\top z + b_{-,i}),
\end{align*}
and $K=4r$, we can rearrange $\widetilde{f}$ in such a way that
      $$
        \widetilde{f}(z) = \sum_{i=1}^K v_i \varrho((w^{(i)})^\top z -b_{i}) + (w^{(0)})^\top z+b_0,
      $$ 
      where $v_i $ is either $ \mu_{s, +}/(2r)$ or  $\mu_{s,-}/(2r)$ depending on whether $w^{(i)}$ was determined by $\mu_{s,+}$ or $\mu_{s,-}$. 
      Notice that, by construction $K \leq C(d) \epsilon^{-2+6/(d+3)}$ for $C=C(d)>0$ a constant.  
Additionally, all weights and biases are bounded in norm by 1,
$$
\vert v_i \vert, \vert b_i \vert, \vert b_0 \vert, \Vert w^{(i)} \Vert_{\infty}, \Vert w^{(0)} \Vert_{\infty} \leq 1
$$
for $i=1,2, \dots, K$. To conclude, notice that by construction $\Vert g - \widetilde{f} \Vert_{\infty} < \epsilon$.
\end{proof}

We now proceed by generalizing Theorem \ref{thm:OurTheo} to arbitrary measures.

\begin{corollary}\label{coro:PrevCor}

Let $d \in \mathbb{N}$, $d \geq 3$ and let $f \in RBV^2(B_1^d)$ with $f(0)=0$. 
Then, for all $\epsilon>0$, there exist $w^{(i)} \in \R^d$, $ b_i, v_i \in \R$, $c\in \R^d$ and $c_0 \in \R$ with 
\begin{align*}
    |v_i|, |b_i|, |c_0|, \|c\|_{\infty}, \Vert w^{(i)}\Vert_{\infty} \leq 5 \Vert f\Vert_{RBV^2},
\end{align*} 
where $i =1,2, \dots, K$ and $K \leq C(d) \|f\|_{RBV^2}^{2-6/(d+3)} \epsilon^{-2+6/(d+3)}$, such that $f$ can be uniformly approximated with approximation error $\epsilon$ by a shallow NN
$$
    f_K(x) = \sum_{i=1}^K v_i \varrho((w^{(i)})^\top x - b_i) + c^\top x + c_0, \quad \text{ for } x \in \R^d.
$$

\end{corollary}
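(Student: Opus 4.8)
The plan is to reduce the general signed-measure statement to the probability-measure case already established in Theorem~\ref{thm:OurTheo}. By Lemma~\ref{lem:fext} and Remark~\ref{rmk:fbound}, since $f \in RBV^2(B_1^d)$ with $f(0)=0$, I can write $f(x) = g(x) + c^\top x + c_0$ on $B_1^d$, where $g(x) = \int_{\mathbb{S}^{d-1}\times[-1,1]} \varrho(w^\top x - b)\,d\mu(w,b)$, the measure $\mu$ satisfies $\|\mu\|_{\mathbb{S}^{d-1}\times[-1,1]} = RTV^2(f) \le \|f\|_{RBV^2}$, and the affine coefficients obey $\|c\|_\infty \le 4\|f\|_{RBV^2}$ and $|c_0| \le \|f\|_{RBV^2}$. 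The only obstruction to quoting Theorem~\ref{thm:OurTheo} verbatim is that $\mu$ is a finite signed measure of total mass $M := \|\mu\|$ rather than a probability measure, so the task is to track how $M$ enters the error, the weights, and the neuron count.

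First I would write the Jordan decomposition $\mu = \mu_+ - \mu_-$ into positive measures with masses $M_\pm := \mu_\pm(\mathbb{S}^{d-1}\times[-1,1])$, so that $M_+ + M_- = M \le \|f\|_{RBV^2}$; if $M = 0$ then $f$ is affine and there is nothing to do, and if one of $M_\pm$ vanishes the corresponding term is simply dropped. When $M_\pm > 0$, the normalizations $\mu_\pm/M_\pm$ are probability measures, so Theorem~\ref{thm:OurTheo} applied with target accuracy $\epsilon/M$ produces shallow NNs $\widetilde f_\pm$ with all weights and biases bounded by $1$ and with $K_\pm \le C(d)(\epsilon/M)^{-2+6/(d+3)}$ neurons, each approximating $\int \varrho(w^\top x - b)\,d(\mu_\pm/M_\pm)$ to accuracy $\epsilon/M$ uniformly on $B_1^d$. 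Recombining, $g = M_+ \int \varrho\,d(\mu_+/M_+) - M_- \int \varrho\,d(\mu_-/M_-)$ is approximated by $M_+ \widetilde f_+ - M_- \widetilde f_-$ with total error at most $M_+(\epsilon/M) + M_-(\epsilon/M) = \epsilon$.

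It then remains to collect the weights. The inner weights and biases of $\widetilde f_\pm$ are unchanged by the scaling and stay bounded by $1$, while the outer coefficients are multiplied by $M_\pm \le \|f\|_{RBV^2}$, hence bounded by $\|f\|_{RBV^2}$ (in fact by $\|f\|_{RBV^2}/2$, since in Theorem~\ref{thm:OurTheo} each $v_i$ already carries a factor $1/(2r)$). The affine part of $g$, namely $\tfrac12 \int (w^\top x - b)\,d\mu$, contributes an extra linear term whose coefficients are bounded by $M/2 \le \|f\|_{RBV^2}/2$, which I absorb into $c^\top x + c_0$; together with the bounds from Remark~\ref{rmk:fbound} this yields $\|c\|_\infty \le 5\|f\|_{RBV^2}$ and $|c_0| \le 5\|f\|_{RBV^2}$. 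To bring the inner weights within the stated bound as well, I would invoke the positive homogeneity $\varrho(\lambda t) = \lambda \varrho(t)$ of the ReLU to rebalance each neuron's inner and outer weights against the factor $\|f\|_{RBV^2}$. Finally, setting $K = K_+ + K_-$ and using $M \le \|f\|_{RBV^2}$ gives $K \le C(d)\|f\|_{RBV^2}^{2-6/(d+3)}\epsilon^{-2+6/(d+3)}$, as claimed.

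The main obstacle is precisely this bookkeeping of the total-variation mass $M = \|\mu\| \le \|f\|_{RBV^2}$: it must be threaded simultaneously through the per-piece accuracy (forcing the choice $\epsilon/M$), the outer weights, the merged affine coefficients, and the neuron count, so that the weight bounds come out linear in $\|f\|_{RBV^2}$ while the neuron count picks up exactly the factor $\|f\|_{RBV^2}^{2-6/(d+3)}$. The one genuinely delicate point is reconciling the inner weights — which the construction naturally bounds by $1$ — with the claimed bound $5\|f\|_{RBV^2}$; this is where the ReLU homogeneity rescaling enters, and some extra care is required in the regime of small $\|f\|_{RBV^2}$.
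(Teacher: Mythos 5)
Your proposal follows essentially the same route as the paper's proof: use the integral representation and coefficient bounds from Remark~\ref{rmk:fbound}, reduce to Theorem~\ref{thm:OurTheo} by normalizing the representing measure, apply that theorem with the accuracy rescaled by the total-variation mass, and then undo the scaling while tracking the error, the weights, and the neuron count. The one structural difference is where the Jordan decomposition happens: you split $\mu = \mu_+ - \mu_-$ at the corollary level and invoke Theorem~\ref{thm:OurTheo} twice, on the genuine probability measures $\mu_\pm/M_\pm$, whereas the paper normalizes the whole signed measure and applies the theorem once to $\mu/RTV^2(f)$ --- a measure of unit total variation that is \emph{not} a probability measure, so strictly speaking your application respects the theorem's stated hypothesis better (the theorem's own proof performs the Jordan decomposition internally, so both routes agree up to a constant factor of at most $2$ in $K$, which is absorbed into $C(d)$). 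The delicate point you flag at the end is genuine: the construction bounds the inner weights $w^{(i)}, b_i$ by $1$, not by $5\|f\|_{RBV^2}$, and positive homogeneity of $\varrho$ can only trade inner against outer weights when $\|f\|_{RBV^2}$ is bounded below by an absolute constant; for very small $\|f\|_{RBV^2}$ the rebalancing fails (each neuron with all weights at most $5\|f\|_{RBV^2}$ contributes at most $\mathcal{O}(\|f\|_{RBV^2}^2)$ in sup norm, and the allowed neuron budget shrinks with $\|f\|_{RBV^2}$ as well, so the budget cannot compensate). You should know, however, that the paper's own proof has exactly the same unresolved step --- after scaling by $\|f\|_{RBV^2}$ it simply asserts that ``the boundedness of the weights follows immediately,'' which is only valid when $\|f\|_{RBV^2}$ is of order $1$ or larger --- so your argument is at parity with the published one, and you were right, rather than remiss, to single this out as the weak point.
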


\begin{proof}
   We assume first that $\Vert f \Vert_{RBV^2} = 1$. By Remark \ref{rmk:fbound}, we know that $f$ has an integral representation of the form
     $$
        f(x) = \int_ {\mathbb{S}^{d-1} \times [-1,1]} \varrho(w^\top x-b) d\mu(w,b) + \widetilde{c}^\top x+\widetilde{c}_0, \quad \text{ for all } x \in B_d^1,
     $$
     where $\Vert \widetilde{c} \Vert_{\infty} \leq 4$ and $\vert \widetilde{c}_0 \vert \leq 1$ and $\mu \in \mathcal{M}( \mathbb{S}^{d-1} \times [-1,1] )$ such that $\Vert \mu \Vert_{\mathbb{S}^{d-1} \times [-1,1]} = RTV^2(f)$. 
     
     If $RTV^2(f) = 0$ the result holds, since $f(x)= \widetilde{c}^T x$  in this case. If $RTV^2(f) \neq 0$, we define $\hat{f} \coloneqq f/RTV^2(f)$ and choose $\epsilon>0$. 

     According to Theorem \ref{thm:OurTheo}, the function 
     $$
     x \mapsto \frac{1}{RTV^2(f)}\int_ {\mathbb{S}^{d-1} \times [-1,1]} \varrho(w^\top x-b) d\mu(w,b)
     $$
     can be approximated by the realization of a NN 
     $$
        \widetilde{f}_K(x) = \sum_{i=1}^K \widetilde{v}_i \varrho((\widetilde{w}^{(i)})^\top x -\widetilde{b}_{i}) + (\widetilde{w}^{(0)})^\top x+\widetilde{b}_0, \text{ for } x \in \R^d
      $$ 
     with approximation error $\epsilon / RTV^2(f)$ and where $K \leq C(d) RTV^2(f) ^{2-6/(d+3)}\epsilon^{-2+6/(d+3)}$ for a constant $C=C(d)>0$ depending on the dimension $d$. This holds because the measure of the integrand $\nu = \mu/RTV^2(f)$ fulfills that $\Vert \nu \Vert_{\mathbb{S}^{d-1} \times [-1,1]}=1$. Moreover, for all $i=1,2, \dots, K$,
     $$
        \vert \widetilde{v}_i \vert, \vert \widetilde{b}_i \vert, \vert \widetilde{b}_0 \vert, \Vert \widetilde{w}^{(i)} \Vert_{\infty}, \Vert \widetilde{w}^{(0)} \Vert_{\infty} \leq 1.
     $$
     Let us define
     $$
          \widehat{f}_K \coloneqq  \widetilde{f}_K+ a^\top x+a_0.
       $$ 
     where $a= \widetilde{c}/RTV^2(f) $, $a_0 =  \widetilde{c}_0/RTV^2(f) $. This leads us to the fact that
      \begin{align*}
           \vert \widehat{f}(x) - \widehat{f}_K(x) \vert & = \left \vert  \int_ {\mathbb{S}^{d-1} \times [-1,1]} \varrho(w^\top x-b) d\nu(w,b) - \widetilde{f}_K(x)  \right \vert  < \epsilon/RTV^2(f)
      \end{align*}        
    for all $x \in B_1^d$.
    Finally, if we set     
    $$f_K(x) = RTV^2(f) \widehat{f}(x) = \sum_{i=1}^K v_i \varrho((w^{(i)})^\top x -b_{i}) + ( w^{(0)}+\widetilde{c})^\top x+ (b_0+\widetilde{c}_0), $$     
    where $w^{(0)}= RTV^2(f)\widetilde{w}^{(0)} $, $b_0= RTV^2(f)\widetilde{b}_0 $ and for all $i = 1,2, \dots, K$, $w^{(i)} = RTV^2(f)\widetilde{w}^{(i)}$, $v_i = RTV^2(f) \widetilde{v}_i$ and $b_i = RTV^2(f) \widetilde{b}_i$, it holds that 
    $$
    |f(x) - f_K(x)| = |RTV^2(f) \widehat{f}(x)- RTV^2(f) \widehat{f}_K(x)| = RTV^2(f) | \widehat{f}(x) - \widehat{f}_K(x) | < \epsilon.
    $$   
    We denote $c= w^{(0)} + \widetilde{c}$ and $c_0 = b_0+\widetilde{c}_0$ and observe that $|c_0| \leq |b_0|+| \widetilde{c}_0|  \leq 1+ \Vert f \Vert_{RBV^2} \leq 2$. 
    As $\Vert \widetilde{c} \Vert_{\infty} \leq 4 $, we conclude that $\Vert c \Vert_{\infty} \leq 5$. 
    In combination with the bounds for the weights in Theorem \ref{thm:OurTheo}, we obtain
    \begin{align*}
         |v_i|, |b_i|, |\widetilde{c_0}|, \|\widetilde{c}\|_2, \Vert w^{(i)} \Vert \leq 5.
    \end{align*}    
    For the general case where $\Vert f \Vert_{RBV^2} \neq 1$, we define
       $$
        \hat{f} \coloneqq \dfrac{f}{\Vert f \Vert_{RBV^2}},
       $$
       if $\Vert f \Vert_{RBV^2} \neq 0$ and apply the previous argument to conclude that for every $\epsilon>0$ there is a $NN$ such that $\Vert \widehat{f}-\widehat{f}_K \Vert < \epsilon / \Vert f \Vert_{RBV^2}$ where $K \leq C(d) \|f\|_{RBV^2}^{2-6/(d+3)} \epsilon^{-2+6/(d+3)}$ with all weights bounded by $5$. 
    If we denote  $f_K = \Vert f \Vert_{RBV^2} \widehat{f}_K$, we observe that $\Vert f-f_K \Vert_{\infty} < \epsilon.$  
    The boundedness of the weights follows immediately. If $\Vert f \Vert_{RBV^2} = 0$, $f=0$ and the result holds trivially.
    
    \end{proof}

 We now have all the results to prove Proposition \ref{coro:OurCor}. Essentially, we only need to express the approximation accuracy $\epsilon$ in terms of the number of neurons of a neural network.

 \begin{proof}[Proof of Proposition \ref{coro:OurCor}]
    For $N\in \N$, we set 
   $$
   \epsilon = \left(C(d) \|f\|_{RBV^2}^{2-6/(d+3)}/N\right)^{(d+3)/(2d)}.
   $$  
    Then, according to Corollary \ref{coro:PrevCor}, there is $\{w^{(1)}, \dots,w^{(r)}\} \subset [-1,1]^d$, $\{v_1, v_2, \dots, v_r \} \subset [-1,1]$ and $\{b_1, b_2, \dots, b_r \} \subset[-1,1]$ such that if we set
    $$f_r(x) = \sum_{i=1}^r v_i \varrho( (w^{(i)})^\top z-b_i) + c^Tx + c_0,$$ it holds that
    \begin{eqnarray*}
        |f(x) - f_K(x)  | = \left \vert \int_{\mathbb{S}^d} \varrho( w^\top x - b ) d\mu(w,b) -  \sum_{i=1}^r v_i \varrho( (w^{(i)})^\top x-b_i)  \right \vert < \epsilon,
    \end{eqnarray*}
    where $r \leq C(d) \|f\|_{RBV^2}^{2-6/(d+3)}
 \epsilon^{-2+6/(d+3)} = N$ for all $x \in B_1^d$. 
    We  set $w^{(r+1)}= \dots = w^{(N)} = 0 \in \R^d$, $v_{r+1} = \dots = v_N= 0 \in \R$ and $b_{r+1} = \dots = b_N= 0 \in \R$ and obtain  
    \begin{eqnarray*}
        \left \vert \int_{\mathbb{S}^d} \varrho( w^\top x - b ) d\mu(w,b) -  \sum_{i=1}^N v_i \varrho( (w^{(i)})^\top x-b_i ) \right \vert < \epsilon
    \end{eqnarray*}
    for all $x \in B_1^d$. Then, if
    $$
        f_K(x) = \sum_{i=1}^r v_i \varrho( (w^{(i)})^\top x-b_i) + c^Tx + c_0,
    $$ 
    the result follows.
    Finally, notice that $c^\top x = \varrho(c^Tx)-\varrho(-c^Tx)$. Hence, $f_N$ can be expressed as 
    $$
        f_N(x) = \sum_{i=1}^{N+2} v_i \varrho((w^{(i)})^\top x -b_{i}) + c_0,  
    $$ 
    and after counting the number of weights we conclude that 
    $$
        f_N \in \NN(d, 2, N+2, (2+d)(N+2)+1, 5 \Vert f \Vert_{RBV^2}).
    $$    
\end{proof}

\section{Upper bounds on learning horizon functions}\label{sec:Learning}

In this section, we achieve two main goals: we show that there is a NN that approximates horizon functions associated to $RBV^2$ functions and we present upper bounds for the corresponding learning problem. Let us formalize the notion of horizon functions associated to an arbitrary set of functions $\mathcal{B}$. These are sets of binary functions such that the discontinuity or the boundary between the classes can be parameterized as a regular function in all but one coordinate. 

\begin{definition}\label{def:Hor}
Let $d \in \N$, $d\geq 2$ and assume that $\mathcal{B} \subset C(B_1^{d-1}, \R)$. We define the \emph{set of horizon functions associated to $\mathcal{B}$} by 
\begin{align*}
    H_{\mathcal{B}} \coloneqq \{f_h^{i} = \mathds{1}_{x_i \leq h(x^{[i]})} \colon h \in \mathcal{B}, i \in [d]\},
\end{align*}
where $x^{[i]} \coloneqq (x_1, \dots, x_{i-1}, x_{i+1}, \dots, x_d)$ for $i \in [d]$. A function $f_h^{i} \in H_{\mathcal{B}}$ is called \emph{a horizon function with decision boundary described by $h \in \mathcal{B}$}.
\end{definition}

 In the sequel, we will analyze horizon functions with $RBV^2$ functions as the decision boundary, i.e., we choose $\mathcal{B} = RBV^2(B_1^{d-1})$.
\noindent Based on Proposition \ref{coro:OurCor}, we now produce an approximation of horizon functions with $RBV^2$ decision boundary by NNs.
We will compute the error of approximating Horizon functions via the 0-1 loss. 
To this end, we need to specify an underlying measure. 
This would typically be a uniform/Lebesgue measure, but at the very least, it needs to be \emph{tube-compatible}  (see \cite[Section 6]{caragea2020neural}).

\begin{definition}
Let $\mu$ be a finite Borel measure on $\R^d$. We say that $\mu$ is \emph{tube compatible with
parameters $\alpha \in (0, 1]$ and $C > 0$} if for each measurable function $f \colon \R^{d-1} \to \R$, each $i \in [d]$ and each $\epsilon \in (0, 1]$, we have
\begin{align}
    \mu(T_{f, \epsilon}^{i}) \leq  C \epsilon^{\alpha} \text{ where } T_{f, \epsilon}^{i} \coloneqq \{x \in \R^d \colon |x_i - f(x^{[i]})| \leq \epsilon\}, 
\end{align}
The sets $T_{f, \epsilon}^{i}$ are called \emph{tubes of width $\epsilon$ (associated to $f$)}.
\end{definition}
We now construct a NN that approximates horizon functions associated to $RBV^2$ functions. The proof is based on the first two steps of the proof of \cite[Theorem 3.7]{caragea2020neural}, but replacing Barron- by $RBV^2$ functions. 
\cite[Theorem 3.7]{caragea2020neural} obtains an approximation rate of $\mathcal{O}(N^{-\alpha/2})$ for $N \to \infty$, for Horizon functions associated with Barron functions, which is slightly slower than that for horizon functions with $RBV^2$ decision boundaries.
\begin{theorem}\label{thm:HorizonFunctionsApproximationTheorem}
  Let $d \in \N_{\geq 2}$, $N \in \N$, $q,C > 0$, and $\alpha \in (0,1]$. Further let, $h \in H_{\mathcal{B}}$, where $\mathcal{B} \coloneqq \{f \in RBV^2(B_1^{d-1}) \colon f(0)=0, \quad \Vert f \Vert_{RBV^2(B_1^{d-1})} \leq q\}$.

  \noindent There exists a NN $I_N$ with two hidden layers
  such that for each tube compatible measure $\mu$ with parameters $\alpha,C$, we have
 \begin{align} \label{eq:thisWeWantToProve}
    \mu
    (
      \{
        x \in B_1^d : h (x) \neq I_N(x)
      \}
    )
    \lesssim_d C q^\alpha N^{-\alpha \frac{d+3}{2d}}.
 \end{align}
Moreover, $0 \leq I_N(x) \leq 1$ for all $x \in \R^d$
  and the architecture of $I_N$ is given by
  \[
    \mathcal{A}
    = \big(
        d, \,\, N+2, \,\, 2,  1
      \big)
    .
  \]
  Thus, $I_N$ has at most $d+N+5$ neurons and at most $(d+3)N + 2d+ 11$ non-zero weights.
  The weights (and biases) of $I_N$ are bounded in magnitude by
  $\max \{1, \lceil \sqrt{2} q N^{\frac{d+3}{2d}} \rceil\}$.
\end{theorem}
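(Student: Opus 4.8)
The plan is to follow the five-step outline given after Theorem~\ref{thm:HorizonFunctionsApproximationTheoremIntro}: realize the horizon function as a composition of the boundary function with a regularized Heaviside step, approximate each factor by a network with controlled weights, and convert the resulting uniform error on the boundary into a measure estimate on the mismatch set via tube-compatibility. Writing $h = \mathds{1}_{x_i \le f(x^{[i]})}$ for some $f \in \mathcal{B}$ and $i \in [d]$, and letting $H = \mathds{1}_{(0,\infty)}$ be the Heaviside function, one has $h(x) = H(f(x^{[i]}) - x_i)$ at every $x$ with $x_i \ne f(x^{[i]})$. Thus $h$ and the map $x \mapsto H(f(x^{[i]}) - x_i)$ differ only on $\{x_i = f(x^{[i]})\}$, which is contained in $T_{f,\epsilon}^{i}$ for every $\epsilon>0$ and hence has $\mu$-measure $0$ for any tube-compatible $\mu$; so it suffices to approximate $H(f(x^{[i]}) - x_i)$.

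First I would invoke Proposition~\ref{coro:OurCor}, applied to the boundary function $f \in RBV^2(B_1^{d-1})$, to obtain a shallow network $f_N$ with $N$ ridge units and affine part, all weights bounded in magnitude by $5\Vert f\Vert_{RBV^2} \le 5q$, and $\Vert f - f_N\Vert_{L^\infty} \lesssim_d q\,N^{-(d+3)/(2d)}$. Next I would regularize the step: for $\delta>0$ put $H_\delta(t) = \tfrac{1}{2\delta}\bigl(\varrho(t+\delta) - \varrho(t-\delta)\bigr)$, which equals $0$ for $t \le -\delta$, equals $1$ for $t \ge \delta$, interpolates linearly on $(-\delta,\delta)$, takes values in $[0,1]$, and therefore agrees with $H$ off $(-\delta,\delta)$. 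I then set $I_N(x) = H_\delta(f_N(x^{[i]}) - x_i)$. Realized as a ReLU network, its first hidden layer carries the $N$ ridge units of $f_N$ together with two units implementing the affine functional $x \mapsto c^\top x^{[i]} - x_i$ as $\varrho(\cdot) - \varrho(-\cdot)$ (this is the source of the ``$+2$''), and its second hidden layer carries the two units of $H_\delta$; the architecture $(d, N+2, 2, 1)$ then gives exactly $d+N+5$ neurons and $(d+3)N + 2d + 11$ weights, and $0 \le I_N \le 1$ is immediate from $H_\delta \in [0,1]$.

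The decisive estimate is step~4: as long as $\delta > \Vert f - f_N\Vert_{L^\infty}$, the surrogate already reproduces the exact label wherever it saturates. Indeed, if $|f_N(x^{[i]}) - x_i| \ge \delta$ then $I_N(x) \in \{0,1\}$, and $\delta > \Vert f - f_N\Vert_{L^\infty}$ forces $f(x^{[i]}) - x_i$ to share the sign of $f_N(x^{[i]}) - x_i$, so $I_N(x) = H(f(x^{[i]}) - x_i) = h(x)$. Consequently the mismatch set lies in $\{|f_N(x^{[i]}) - x_i| < \delta\} \subseteq \{|f(x^{[i]}) - x_i| < 2\delta\} = T_{f,2\delta}^{i}$, and tube-compatibility yields $\mu(\{h \ne I_N\}) \le \mu(T_{f,2\delta}^{i}) \le C(2\delta)^{\alpha}$. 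Choosing $\delta \asymp q\,N^{-(d+3)/(2d)}$ (which exceeds $\Vert f - f_N\Vert_{L^\infty}$) produces the claimed bound $\lesssim_d C q^{\alpha} N^{-\alpha(d+3)/(2d)}$, while the output weights $\pm 1/(2\delta)$ of $H_\delta$ dominate the $\mathcal{O}(q)$ inner weights and grow like $N^{(d+3)/(2d)}$, which is the stated order of the weight-magnitude bound.

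I expect the main obstacle to be the quantitative bookkeeping rather than any single sharp inequality: one must simultaneously track the two scales---the $RBV^2$-norm bound $q$, which governs both the inner weights and the approximation error of $f_N$, and the regularization width $\delta$, which governs the slope (hence the output weights) of $H_\delta$---through the two-layer composition, and the choice of $\delta$ must be balanced so that the weight-magnitude bound and the measure bound emerge together with the correct dependence on $q$ and $N$. Two lesser points warrant care: feeding the coordinate $x_i$ forward into the second hidden layer, which is handled by folding $-x_i$ into the affine functional carried by the two auxiliary units; and the fact that Proposition~\ref{coro:OurCor} is stated for ambient dimension at least $3$, so applying it to the $(d-1)$-dimensional boundary function tacitly restricts the quantitative rate to that regime.
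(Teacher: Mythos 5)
Your proposal follows essentially the same route as the paper's own proof: approximate the boundary function via Proposition~\ref{coro:OurCor}, compose with a two-ReLU ramp surrogate $H_\delta$ for the Heaviside function to get $I_N(x) = H_\delta(f_N(x^{[i]}) - x_i)$ with architecture $(d, N+2, 2, 1)$, show the mismatch set lies in a tube of width $\sim \delta + \|f - f_N\|_{L^\infty}$, and conclude by tube-compatibility with $\delta$ of order $N^{-\frac{d+3}{2d}}$. Your deviations are cosmetic rather than substantive --- a symmetric ramp instead of the paper's one-sided $\frac{1}{\delta}(\varrho(z)-\varrho(z-\delta))$, tubes centered at $f$ rather than at $f_N$, and $\delta \asymp q N^{-\frac{d+3}{2d}}$ rather than $\delta = N^{-\frac{d+3}{2d}}$ (which in fact recovers the stated $q^\alpha$ factor more cleanly, at the cost of a $1/q$ in the outer weights) --- and your closing caveat that Proposition~\ref{coro:OurCor} forces $d-1 \geq 3$, i.e.\ $d \geq 4$, is a genuine point that the paper's proof silently glosses over despite the theorem being stated for $d \in \N_{\geq 2}$.
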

\begin{proof}
Since $h \in H_\mathcal{B}$, there exists $f \in \mathcal{B}$ such that $h(x) = \mathds{1}_{x_i \leq f(x^{[i]})}$. We have by Proposition \ref{coro:OurCor}, that for all $N \in \N$, there exist $w^{(k)} \in \R^d$, $ b_k, v_k \in \R$ for $k =1,2, \dots, N$,  $c\in \R^d$ and $c_0 \in \R$ with 
\begin{align*}
    |v_k|, |b_k|, |c_0|, \|c\|, \Vert w^{(k)} \Vert \leq 5 \Vert f \Vert_{RBV^2},
\end{align*} 
such that for 
$$
    f_N(x) = \sum_{k=1}^N v_k \varrho((w^{(k)})^\top x - b_k) + c^\top x + c_0,
$$
it holds that 
\begin{align}\label{eq:weUseThisEstimateAndTheConstantInItBelow}
    \|f - f_N\|_{L^\infty(B_1^{d-1})} \lesssim_d  \Vert f \Vert_{RBV^2} N^{-\frac{d+3}{2d}}.
\end{align}

\noindent We define for $1\geq \delta >0$ the one layer NN  
$$
    H_\delta(z) \coloneqq \frac{1}{\delta} \cdot \left(\varrho(z) - \varrho(z-\delta) \right) \text{ for } z \in \R
$$ 
and for $x \in \R^d$
\begin{align*}
    h_N^{\delta}(x)&\coloneqq H_\delta(f_N(x^{[i]}) - x_i) = H_\delta\left(f_N(x^{[i]}) - \varrho(x_i) + \varrho(-x_i)\right),\\
    h_N(x) &\coloneqq \mathds{1}_{\R^+}(f_N(x^{[i]}) - x_i). 
\end{align*}
Note that $h_N^{\delta}$ is a NN with two hidden layers, architecture 
  \[
    \mathcal{A}
    = \big(
        d, \,\, N+2, \,\, 2,  1
      \big)
    ,
  \]
  and all weights bounded by $\max\{ 1/\delta,  5 \Vert f \Vert_{RBV^2} \}$. We choose $\delta \coloneqq N^{-\frac{d+3}{2d}}$ and proceed to prove \eqref{eq:thisWeWantToProve} for $I_N = f_N^{\delta}$. Towards this estimate, we define 
  \begin{align*}
        S \coloneqq \{ x \in  B_1^d \colon h(x) = 1\} \text{ and } T \coloneqq \{ x \in  B_1^d \colon h_N = 1\}.
  \end{align*}
  We have that
\begin{align*}
    \left\{
        x \in B_1^d : h (x) \neq  h_N^{\delta}(x)
      \right\} &\subset \left\{  x \in B_1^d : 0 < h_N^{\delta}(x) < 1 \right\} \cup \left\{  x \in B_1^d : h_N^{\delta}(x) = 1,  h (x) = 0 \right\}\\
      &\qquad  \cup \left\{  x \in B_1^d : h_N^{\delta}(x) = 0,  h (x) = 1 \right\}\\
      &= \left\{  x \in B_1^d : 0 < h_N^{\delta}(x) < 1 \right\} \cup \left\{  x \in B_1^d : h_N(x) = 1,  h (x) = 0 \right\}\\
      &\qquad \cup \left\{  x \in B_1^d : h_N(x) = 0,  h (x) = 1 \right\}\\
      &\eqqcolon  \left\{  x \in B_1^d : 0 < h_N^{\delta}(x) < 1 \right\}  \cup S \Delta T.
\end{align*}
We observe that by \eqref{eq:weUseThisEstimateAndTheConstantInItBelow}, for $\gamma \sim_d RTV^2(f)$ 
\begin{align*}
&S \Delta T \\
&=\left\{ x \in B_1^d \colon f(x^{[i]}) < x_i \leq f_N(x^{[i]})\right\} \cup  \left\{ x \in B_1^d \colon f_N(x^{[i]}) < x_i \leq f(x^{[i]})\right\}\\
&\subset \left\{ x \in B_1^d \colon 0 \leq f_N(x^{[i]}) - x_i < \gamma N^{-\frac{d+3}{2d}}\right\} \cup \left\{ x \in B_1^d \colon -\gamma N^{-\frac{d+3}{2d}} \leq f_N(x^{[i]}) - x_i < 0\right\}\\
&\subset  \left\{ x \in B_1^d \colon |f_N(x^{[i]}) - x_i| \leq \gamma N^{-\frac{d+3}{2d}}\right\}.
\end{align*}
In addition,
\begin{align*}
    \left\{  x \in B_1^d : 0 < h_N^{\delta}(x) < 1 \right\} \subset \{  x \in B_1^d : |f_N(x^{[i]}) - x_i| \leq \delta\}.
\end{align*}
We conclude by the $\alpha, C$ tube-compatibility of $\mu$ that
\begin{align*}
    &\mu(\left\{
        x \in B_1^d : h (x) \neq  h_N^{\delta}(x)
      \right\})\\
      &\qquad\leq \mu\left(\left\{  x \in B_1^d : 0 < h_N^{\delta}(x) < 1 \right\}\right) + \mu(S \Delta T)\\
      &\qquad\leq \mu\left(\{  x \in B_1^d : |f_N(x^{[i]}) - x_i| \leq \delta\}\right) + \mu\left(\{ x \in B_1^d \colon |f_N(x^{[i]}) - x_i| \leq \gamma N^{-\frac{d+3}{2d}} \}\right)\\
      &\qquad\leq C \cdot \left(\delta^\alpha + \gamma^\alpha N^{-\alpha\frac{d+3}{2d}}\right).
\end{align*}
Due to the choice of $\delta = N^{-\frac{d+3}{2d}}$, this completes the proof.
\end{proof}

\subsection{Estimation bounds}

Now, we state our result regarding upper bounds for the estimation problem. 
We begin by introducing two concepts: the Hinge loss for a target concept and the empirical risk minimizer.

Although Theorem \ref{thm:HorizonFunctionsApproximationTheorem}, showed a bound for the 0-1 loss, this is not a useful error measure in practice, due to its lack of continuity. 
Instead, we formulate our learning bounds with respect to the Hinge loss, which is significantly more practical in applications. 
As we consider horizon functions $h:[0,1]^d \to \{0,1 \} $ as our target classifiers and the range is not $\{-1,1 \}$ as would be typical for the Hinge loss, we present a slightly different definition of Hinge function. 

\begin{definition}\label{def:HingeRisk}
  Define $\phi \colon \R \to \R$, $\phi(x) \coloneqq \max \{0, 1-x\}$. 
    Let $d \in \N$ and let $\mu$ be a Borel measure on $B_1^d$.
  For a (measurable) target concept $h^* \colon B_1^d \to \{0, 1\}$, we define the \emph{Hinge risk} of a (measurable) function $h \colon B_1^d \to [0, 1]$ as
	\begin{align*}
		\mathcal{E}_{\phi, \mu, h^*}(f)
    \coloneqq \mathbb{E}_{X \sim \mu}
                \Big[
                  \phi
                  \Big(
                    \bigl(2 h^*(X) - 1\bigr)
                    \cdot \bigl(2 h(X) - 1\bigr)
                  \Big)
                \Big]
    .
	\end{align*}
 Let $\Lambda = B_1^d \times [0,1]$.  For a sample $S = (x_i,y_i)_{i=1}^m \in \Lambda^m$, $m \in \N$,
	we define the \emph{empirical $\phi$-risk} of $h \colon B_1^d \to [0, 1]$ as
	\begin{align*}
		\widehat{\mathcal{E}}_{\phi, S}(h)
    \coloneqq \frac{1}{m}
              \sum_{i=1}^m
                \phi
                \big(
                  \bigl(2 y_i - 1\bigr) \cdot \bigl(2 h(x_i) - 1\bigr)
                \big)
    .
	\end{align*} 

	 \noindent Finally, for a sample $S = (x_i,y_i)_{i=1}^m \in \Lambda^m$
	and a set $\mathcal{H} \subset \{h \colon B_1^d \to [0, 1]\}$,
	we call $h_S \in \mathcal{H}$ an empirical $\phi$-risk minimizer, if 
	\begin{align}
		 \widehat{\mathcal{E}}_{\phi, S}(h_S)
     = \min_{h \in \mathcal{H}}
         \widehat{\mathcal{E}}_{\phi, S}(h).
		\label{eq:empPhiRisk}
	\end{align}
\end{definition}
Finally, we present upper bounds for estimating horizon functions associated to $RBV^2$ functions. The proof of this theorem is similar to that of \cite[Theorem 5.7]{caragea2020neural}.

\begin{theorem}\label{thm:UpperBoundViaHingeLossMinimisation}
  Let $\kappa > 0$, $d\in \N_{\geq 4}$. There is a constant $\tau \geq 1$ depending on $d$ only such that the following holds:
	Let $h \in H_{\mathcal{B}}$, where $\mathcal{B} \coloneqq \{f \in RBV^2(B_1^d) \colon f(0)=0 \quad \Vert f \Vert_{RBV^2} \leq q\}$ and let $\lebesgue$ be a probability measure on $B_1^d$.
	For each $m \in \N$, let
	\begin{align*}
    \widetilde{N}(m)
    &\coloneqq \big\lceil \tau m^{2d/(3d+3)}\big\rceil, \\
		N(m)
    &\coloneqq \big\lceil \widetilde{N}(m) + d + 3 \big\rceil, \\
		W(m)
    &\coloneqq \big\lceil (d+4)\widetilde{N}(m) + 3\big\rceil , \\
		B(m)
    &\coloneqq \Big\lceil
                 \max \{1, \sqrt{2} q \widetilde{N}(m)^{\frac{d+3}{2d}}\}
               \,\,\Big\rceil
    .
	\end{align*}

 \noindent For each $m \in \N$, let $S$ be a training sample of size $m$;
  that is, $S = \bigl( X_i, h(X_i) \bigr)_{i=1}^m$ with $X_i \IIDsim \lebesgue$
  for $i \in \{ 1,\dots,m \}$.
  Furthermore, let $\Phi_{m,S} \in \NN_\ast \bigl(d, 2, N(m), W(m), B(m)\bigr)$
  be an empirical Hinge-loss minimizer; that is,
	\[
    \widehat{\mathcal{E}}_{\phi, S}\bigl(\Phi_{m, S}\bigr)
    = \min_{f \in \NN_\ast (d, N(m), W(m), B(m))}
        \widehat{\mathcal{E}}_{\phi, S}(f).
	\]
	Then, with $H \coloneqq \Indicator_{[1/2,\infty)}$, we have
	\begin{equation}
		\EE_S
    \Bigl[
      \EE_{X \sim \lebesgue}
        \bigl(
          \bigl|H (\Phi_{m, S} (X)) - h(X)\bigr|^2
        \bigr)
    \Bigr]
    =    \EE_S
         \big[
           \PP_{X \sim \lebesgue} \bigl(H(\Phi_{m,S}(X)) \neq h(X)\bigr)
         \big]
		\lesssim m^{-\frac{d+3}{3d+3} + \kappa} ,
    \label{eq:MainUpperBound}
	\end{equation}
  where the implied constant only depends on $d, \kappa, \tau$.
\end{theorem}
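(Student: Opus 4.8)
The plan is to derive the generalization bound by combining the approximation guarantee of Theorem \ref{thm:HorizonFunctionsApproximationTheorem} with a uniform-deviation (covering-number) argument over the hypothesis class $\NN_\ast(d, 2, N(m), W(m), B(m))$, following the structure of \cite[Theorem 5.7]{caragea2020neural}. First I would set up the standard oracle-type decomposition for the empirical-risk minimizer $\Phi_{m,S}$: the excess $0$-$1$ error is controlled by the excess Hinge risk (using that $H(\Phi_{m,S})$ thresholds $\Phi_{m,S}$ at $1/2$ and that the Hinge loss upper-bounds the $0$-$1$ loss via the standard calibration inequality $\mathds{1}_{H(t)\neq y}\le \phi((2y-1)(2t-1))$), and then split the excess Hinge risk of the minimizer into an \emph{approximation} term plus an \emph{estimation} (generalization) term:
\begin{align*}
  \mathcal{E}_{\phi,\lebesgue,h}(\Phi_{m,S})
  &\le \underbrace{\inf_{f\in\NN_\ast}\mathcal{E}_{\phi,\lebesgue,h}(f)}_{\text{approximation}}
     + \underbrace{2\sup_{f\in\NN_\ast}\big|\mathcal{E}_{\phi,\lebesgue,h}(f)-\widehat{\mathcal{E}}_{\phi,S}(f)\big|}_{\text{estimation}} .
\end{align*}

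For the approximation term I would feed the network $I_N$ produced by Theorem \ref{thm:HorizonFunctionsApproximationTheorem} (with $N=\widetilde N(m)$, whence its architecture, weight count, and weight bound match $N(m)$, $W(m)$, $B(m)$) into the Hinge risk. Since $I_N$ agrees with $h$ outside a set of $\lebesgue$-measure $\lesssim_d q^\alpha N^{-\alpha(d+3)/(2d)}$, and on that bad set the Hinge loss is bounded (the arguments lie in $[0,1]$ so $\phi$ is bounded by a constant), the approximation error is $\lesssim_d q^\alpha \widetilde N(m)^{-\alpha(d+3)/(2d)}$; taking $\alpha=1$ (the uniform/Lebesgue measure is tube-compatible with $\alpha=1$) and $\widetilde N(m)\sim m^{2d/(3d+3)}$ gives the rate $m^{-(d+3)/(3d+3)}$. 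For the estimation term I would invoke a uniform law of large numbers: bound the covering number (or pseudo-dimension) of $\NN_\ast(d,2,N(m),W(m),B(m))$ in terms of $W(m)$, $L=2$, and $B(m)$ via the standard neural-network complexity estimates (e.g. the parameter-counting covering bound with logarithmic dependence on $B(m)$ and the sample size), use the Lipschitz-continuity of $\phi$ to transfer the covering of the function class to a covering of the induced loss class, and then apply a symmetrization/Dudley or uniform-deviation inequality. This yields an estimation term of order $\sqrt{W(m)\log(B(m)m)/m}$ up to constants.

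The bulk of the work — and the main obstacle — is balancing these two terms and absorbing the logarithmic factors into the $m^{\kappa}$ slack. With $W(m)\sim \widetilde N(m)\sim m^{2d/(3d+3)}$ and $\log B(m)=O(\log m)$, the estimation term scales like $\sqrt{m^{2d/(3d+3)}\cdot \log m / m}=m^{-(d+3)/(2(3d+3))}\sqrt{\log m}$, and one must check that this is dominated by (or at worst comparable to) the approximation rate $m^{-(d+3)/(3d+3)}$; the exponent arithmetic is exactly where the choice $\widetilde N(m)=\lceil\tau m^{2d/(3d+3)}\rceil$ is engineered to make the two rates match, and the $d\ge 4$ hypothesis together with the arbitrary $\kappa>0$ is what lets the extra $\sqrt{\log m}$ and the constant $\tau$ be swallowed into $m^{\kappa}$. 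I expect the delicate points to be: (i) verifying the calibration/comparison inequality for the nonstandard $\{0,1\}$-valued Hinge convention of Definition \ref{def:HingeRisk}, so that the Hinge excess risk really controls $\PP(H(\Phi_{m,S})\neq h)$; (ii) taking the expectation $\EE_S$ through the supremum correctly (the estimation term is the quantity whose expectation the uniform deviation bound controls); and (iii) the exponent bookkeeping that pins down the optimal $\widetilde N(m)$ and confirms the final rate $m^{-(d+3)/(3d+3)+\kappa}$, with the constant depending only on $d,\kappa,\tau$.
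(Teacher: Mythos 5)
There is a genuine gap, and it lies exactly at the point you flag as ``exponent arithmetic'': the two terms in your decomposition do \emph{not} balance. Your approximation term is $\widetilde N(m)^{-(d+3)/(2d)} \sim m^{-(d+3)/(3d+3)}$, but your estimation term, coming from a standard uniform-deviation/covering bound, is of order $\sqrt{W(m)\log(B(m)m)/m} \sim m^{-\frac{d+3}{2(3d+3)}}\sqrt{\log m}$, whose exponent is only \emph{half} of the approximation exponent. Since $m^{-\frac{d+3}{2(3d+3)}} \gg m^{-\frac{d+3}{3d+3}}$, the estimation term dominates, and your decomposition can only yield a bound of order $m^{-\frac{d+3}{2(3d+3)}+\kappa}$, strictly weaker than the claimed $m^{-\frac{d+3}{3d+3}+\kappa}$. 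No choice of $\widetilde N(m)$ rescues this: optimizing $N^{-(d+3)/(2d)} + \sqrt{N/m}$ over $N$ gives at best the exponent $(d+3)/(4d+6)$, which is still smaller than $(d+3)/(3d+3)$. The choice $\widetilde N(m) = \lceil \tau m^{2d/(3d+3)}\rceil$ in the statement is in fact engineered to balance the approximation error against a \emph{fast-rate} estimation term of order $W(m)\log m/m \sim m^{-\frac{d+3}{3d+3}}\log m$, not against $\sqrt{W(m)/m}$.

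The missing idea is therefore the fast-rate analysis for empirical Hinge-risk minimization with \emph{noiseless} labels: because $y_i = h(X_i)$ deterministically, the excess Hinge loss satisfies a Bernstein-type (margin) condition under which localization arguments give an estimation error scaling like (complexity)$/m$ rather than $\sqrt{\text{(complexity)}/m}$. This is precisely what the paper uses: its proof of Theorem \ref{thm:UpperBoundViaHingeLossMinimisation} is a reduction to \cite[Theorem 5.7]{petersen2021optimal} (which in turn rests on \cite[Theorem 5]{kim2018fast}, a fast-rate bound for Hinge-loss ERM), with the parameters $a_m = m^{-\frac{d+3}{3d+3}}$, $\delta_m^* = m^{-\frac{d+3}{3d+3}+\kappa}$, $\iota = \frac{2d}{3d+3}+\kappa$ adapted to the $RBV^2$ approximation rate, the network approximant supplied by Theorem \ref{thm:HorizonFunctionsApproximationTheorem} instead of the Barron-class approximant, and a remark that the underlying result holds on $B_1^d$ and not merely on $[0,1]^d$. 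Your calibration step (i) and the handling of $\EE_S$ in (ii) are fine and are also needed in the paper's route, but without replacing your slow-rate uniform-deviation bound by the noiseless fast-rate machinery, the proof cannot reach the stated exponent.
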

\begin{proof}
The proof is analogous to \cite[Theorem 5.7]{petersen2021optimal} with two differences, first, we choose the values 
$$
    a_m = m^{-\frac{d+3}{3d+3}}, \quad \delta_m^* = m^{-\frac{d+3}{3d+3} + \kappa}, \text{ and } \iota = \frac{2d}{3d + 3} + \kappa
$$
different from those in \cite[Theorem 5.7]{petersen2021optimal}. Second, to construct appropriate NN approximations, we use Theorem \ref{thm:HorizonFunctionsApproximationTheorem} instead of \cite[Theorem 5.3]{petersen2021optimal}. 
Note that, \cite[Theorem 5.7]{petersen2021optimal} is stated for functions defined on $[0,1]^d$. However, this is only due to the fact that it is based on \cite[Theorem 5]{kim2018fast} which, while stated for functions on $[0,1]^d$ holds for every compact subset of $\R^d$ as stated at the beginning of Section 2 of \cite{kim2018fast}. Therefore, \cite[Theorem 5.7]{petersen2021optimal} holds on $B_1^d$ equipped with a probability measure, as well.
\end{proof}

\section{Numerical Experiments}\label{sec:numeric}
In this section,  we illustrate how different normalizations of the boundary function affect the learning problem of binary classification. 
We assume that our target classifier outputs a binary label for each point $x \in \R^d$ with $d=2,3$ and $4$. 
Here, the decision boundary is a sufficiently smooth function $f:B_{d-1}(0) \to \R$.  

\subsection{Experiment set-up}
Our objective is to compare the performance of empirical risk minimization over NNs for classification problems with decision boundaries normalized with respect to various norms. To this end, we invoke the following set-up:

\begin{itemize}
    \item We first describe a base set of functions that will later be normalized to form the boundary functions to be learned. 
    For the case $d=2$, we consider the set of functions  
\begin{equation*}
    B \coloneqq \{B_1(0) \ni x \mapsto \sin( 2\pi kx) : k = 1+s/24,\text{ }  s=0,1,\dots, 24, \text{ }  x\in [-1,1] \},
\end{equation*}
where for each $f \in B$ we define $|f| \coloneqq |k| $. Now, for $d=3$, the functions to be learned belong to the set
\begin{equation*}
    B \coloneqq  \{B_2(0) \ni ( x,y) \mapsto \sin( 2\pi kx) \sin(2 \pi l y) : k,l = 1+ s/5, s=0,1, \dots, 5 \text{ } x\in [-1,1] \},
\end{equation*}
and for $f \in B$ we denote $|f|= |k|+|l|$. Finally, for $d=4$, we learn the functions
\begin{equation*}
    B \coloneqq  \{B_3(0) \ni( x,y, z) \mapsto  \sin( 2\pi kx) \sin(2 \pi l y) \sin( 2 \pi jz) : k,l, j = 1, 4/3, 7/3, 2 \text{ } x\in [-1,1] \}.
\end{equation*}
and again, for $f \in B$ we employ the notation $|f|= |k|+|l|+|j|$.

\item We consider the following norms: first, the uniform norm
$$
\Vert f \Vert_{\infty} = \sup_{x \in B_1^{d-1} } \vert f(x) \vert,
$$
second, the $L^1$-norm
$$
\Vert f \Vert_{L^1} = \int_{B_1^{d-1}} \vert f(x) \vert dx,
$$
third, the $C^1$-norm
$$
\Vert f \Vert_{C^1} = \sup_{|k|\leq 1} \Vert f^k \Vert_{ \infty}, $$
where $k \in \N_0^d$, fourth, the Barron norm,
$$
\Vert f \Vert_{\mathrm{Barron}} = \int_{-\infty}^{\infty} | \widehat{f}(w) | \vert w \vert dw
$$
as defined in \cite[Equation 3]{barron1993universal}, where $\widehat{f}$ is the Fourier transform of the function $f$, and the Barron norm. 
As the function $f$ is defined on a bounded domain, its Fourier transform is not well defined. 
To solve this issue and leverage the properties of the functions in the sets $B$ to be learned, we replace the Fourier transform with the appropriate term
\begin{equation*}
    \Vert f \Vert_{\mathrm{Barron}} \approx 2 \pi |f|.
\end{equation*}
Finally, we use the $RBV^2$ norm. 
For the computation of the $RBV^2$ norm, we use different approaches. 
For the case $d=2$, we use \cite[Remark 4]{parhi2022near}. 
It is stated there that the $RBV^2([-1,1])$ space is equivalent to the set of functions of second-order bounded variation 
$$BV^2([-1,1]) = \{f:[-1,1] \to \R  : TV^2(f)< \infty  \},$$ 
where 
$$
TV^2(f) = \int_{-1}^1 \vert D^2 f(x) \vert dx
$$
is the second-order total variation of $f:[-1,1] \to \R$. 

In \cite{parhi2022near}, it is proved that $RTV^2(f) = TV^2(f)$ and the second total-variation seminorm of $f$ is well-defined for all smooth functions $f$.
Hence, for each $f$ with $f(0) = 0$, we can compute its $RBV^2$ norm by
$$
    \|f\|_{RBV^2} = TV^2(f) + |f(1)|. 
$$
For the case $d=3$ and $d=4$, we use a different approach to compute the integral term in Definition \ref{def:RBV^2norm}. To this end, we generate a  set of $20$ vectors $w_i \in S^{d-1}$ and $20$ equidistant parameters $t_i \in [-1,1]$. Next, we compute the Radon transform at these points by using randomly generated points drawn according to the uniform distribution in the hyperplane $\{z: w^Tz=t\}$. Then, by using a method of finite differences, we compute the corresponding derivatives, and we continue to compute the total variation of the function. To compute the norm, we have to evaluate the function $f$ at the origin and at the elements of the canonical basis as in Definition \ref{def:RBV^2norm}.

\item To produce the functions to be learned, we normalize the elements of the corresponding base set $B$ with respect to the uniform, $L^1$, $C^1$, Barron, and $RBV^2$ norms, which yields five sets of functions for each dimension $d$. Next, we compare perform empirical risk minimization over appropriate sets of NNs to learn the classifiers associated to these normalized function classes. 

\item The sample set consists of $m$ points $(x_i, y_i)_{i=1}^m$. 
The points $x_i$ are randomly generated points drawn with respect to uniform distribution on the set $B_{d-1}(0) \times [-2,2]$. 
To determine the value of each $y_i$, we evaluate the function $y = \mathbf{1}_{x_d \leq f(x^{[d]})}$ at each $x_i$. 
We split this sample set into two subsets: the training and the test sets. 
The first one is randomly chosen with 80\% of all samples and is used to find a NN that minimizes the empirical error. 

\item We use a three-layer NN with ReLU activation function, and the number of neurons is determined according to Theorem \ref{thm:UpperBoundViaHingeLossMinimisation} with $\tau = 1$. 
We use the Hinge function as our loss function and the Adam optimizer. 

\item We compare the performance of the selected NN on the test sets. This error is measured in the mean squared sense. 
\end{itemize}

We record the generalization error for each function $f \in B$ and number of samples $m$ and present the results in Figure \ref{fig:2d}. In the $x$-axis, we register the number of samples, and in the $y$-axis, the average test error. In each figure, we plot the error for the five different norms introduced above.

\begin{figure}[htb]\label{fig:2d} 
    \centering
    \begin{minipage}{0.32\textwidth}
    \includegraphics[width = \textwidth]{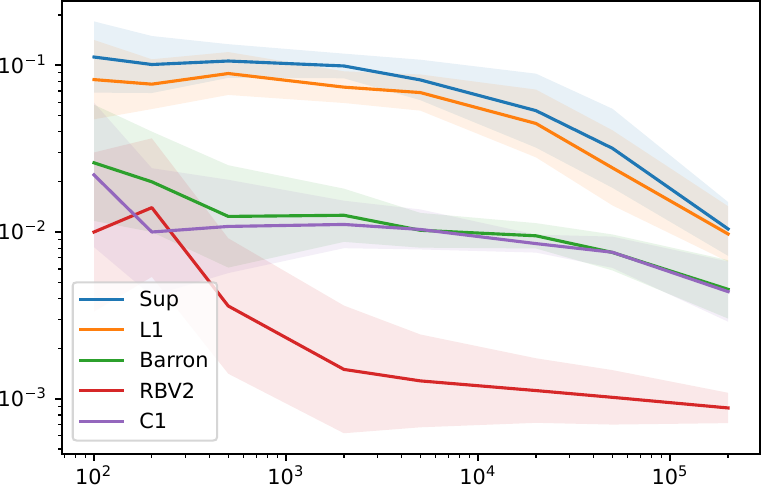}
    \end{minipage}
   \begin{minipage}{0.32\textwidth}
    \includegraphics[width = \textwidth]{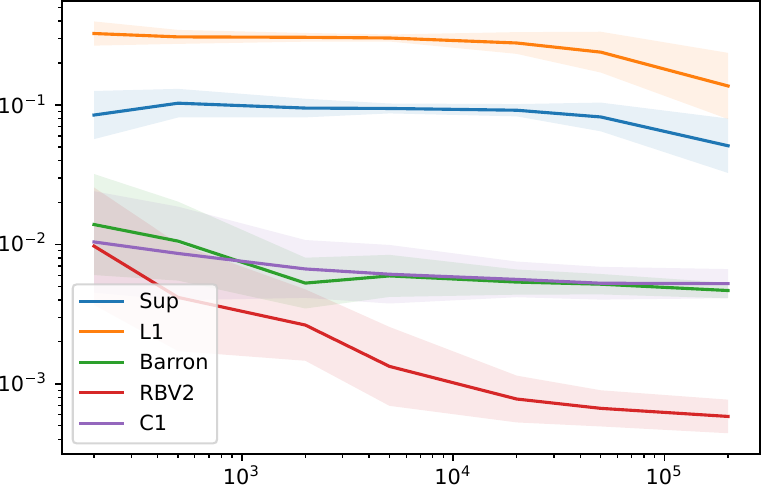}
    \end{minipage}
    \begin{minipage}{0.32\textwidth} 
    \includegraphics[width = \textwidth]{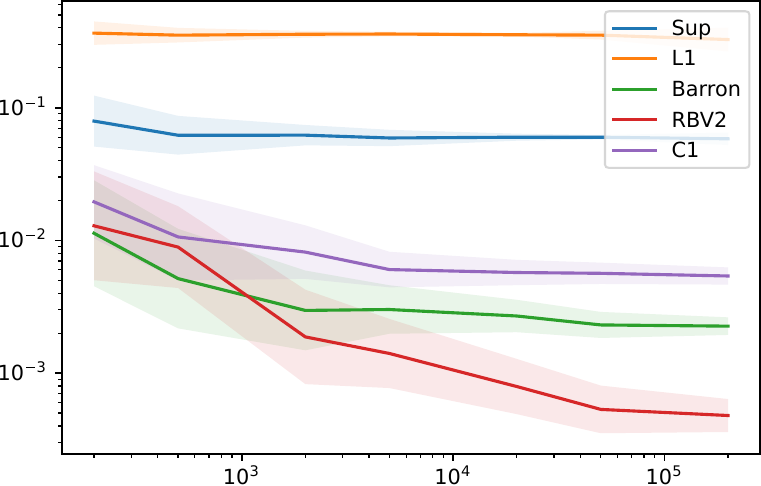}
    \end{minipage}
  \caption{ Plot of the mean relative test error for
the sets of case $d=2$ (left), $d=3$ (middle) and $d=4$ (right). The number of samples varies along the horizontal axis and the mean relative test error is shown on the vertical axis.}
\end{figure}

\subsection{Evaluation and interpretations of the results}
There are a few points we would like to highlight that can be observed from the figures. 
\begin{enumerate}
    \item In Figure \ref{fig:2d}, we can observe that when the classification function boundary is normalized with respect to $L^1$ or $L^{\infty}$ normalized, then the test error is usually the highest. Interestingly, in the case of $d=2$, the test error behaves more favorably compared to higher dimensions. For dimensions larger than two, increasing the number of samples results in almost no improvement in classification performance. This finding is consistent with the lower bounds on learnability discussed by \cite{petersen2021optimal}, which show that classification functions with boundaries in normed spaces characterized by large packing numbers are difficult to learn effectively.

    \item For the case $d=2$ and $d=3$ in \ref{fig:2d}, we observe that learning classification functions normalized using either the $C^1$ norm or the Barron norm exhibit similar performance. However, when the dimension increases to $d=4$, the Barron norm yields better test error results, which suggests that its advantages become more evident in higher dimensions.  

    \item In all cases, the classification functions with $RBV^2$ normalized boundary shows the best performance among all the tested norms. Moreover, the test errors show essentially the same decay for all dimensions tested. These examples support the idea that functions with $RBV^2$ boundary are easier to learn than functions with lower regularity and are in line with the theoretical results of Theorem \ref{thm:UpperBoundViaHingeLossMinimisation}.

    \item We can observe that the performance of $RBV^2$ boundaries surpasses that of Barron spaces.
    This is consistent with our theoretical results from Theorem \ref{thm:UpperBoundViaHingeLossMinimisation} and aligns with the findings in \cite{petersen2021optimal}, further supporting the fact that higher regularity in function boundaries leads to a better learning outcome.    
    
\end{enumerate} 
\section*{Acknowledgement}
P.~P.\ was supported by the Austrian Science Fund (FWF) [P37010]. T.~L.\ was supported by the AXA Research Fund. 
\bibliographystyle{abbrv}
\small
\bibliography{References}

\end{document}